\let\OLDthebibliography\thebibliography
\renewcommand\thebibliography[1]{
  \OLDthebibliography{#1}
  \setlength{\parskip}{0pt}
  \setlength{\itemsep}{0pt plus 0.3ex}
}
\DeclareMathOperator*{\argmin}{arg\,min}
\newtheorem{prop}{Proposition}
\newtheorem{thm}{Theorem}
\newtheorem{lemma}[thm]{Lemma}
\newcommand*\Let[2]{\State #1 $\gets$ #2}
\algrenewcommand\algorithmicrequire{\textbf{Precondition:}}
\algrenewcommand\algorithmicensure{\textbf{Postcondition:}}
\title{Gromov-Wasserstein Graph Coarsening}
\name{Carlos A. Taveras, Santiago Segarra, and C{\'e}sar A. Uribe }
\address{Rice University, Houston, TX 77005, USA}
\begin{document}
%\ninept
%
\maketitle
\begin{abstract}
We study the problem of graph coarsening within the Gromov-Wasserstein geometry. 
Specifically, we propose two algorithms that leverage a novel representation of the distortion induced by merging pairs of nodes. 
The first method, termed Greedy Pair Coarsening (GPC), iteratively merges pairs of nodes that locally minimize a measure of distortion until the desired size is achieved. 
The second method, termed $k$-means Greedy Pair Coarsening (KGPC), leverages clustering based on pairwise distortion metrics to directly merge clusters of nodes. 
We provide conditions guaranteeing optimal coarsening for our methods and validate their performance on six large-scale datasets and a downstream clustering task. 
Results show that the proposed methods outperform existing approaches on a wide range of parameters and scenarios.
\end{abstract}
\begin{keywords}
Graphs, Coarsening, Gromov-Wasserstein
\end{keywords}
\section{Introduction} \label{sec:intro}
With the advent of Graph Neural Networks (GNNs) \cite{kipf2016semi}, there has been a surge in the development of data-driven algorithms for processing structured data, including graphs and their generalizations (e.g., simplicial complexes, cell complexes, and hypergraphs) \cite{xu2019GNNs, battiloro2025}.
These developments are part of a broader trend in data science of leveraging topological, geometric, and algebraic structures to process non-Euclidean data \cite{Petersen2019, bronstein2021geometric, Leus_2023}, and have been applied to solve problems in drug discovery \cite{wang2025drug}, social network analysis \cite{fan2019}, finance \cite{wang_2022}, and wireless communications \cite{olshevskyi2024fully}.
The performance of these models depends critically on the quantity and quality of the data used to train them. 
Consequently, model training can be both time- and resource-intensive, sometimes prohibitively so.

For graph-like data, there are three main dimensionality reduction paradigms: coarsening \cite{chen2022graph, Chen_2023, Jin_Loukas_JaJa, Bravo-Hermsdorff_Gunderson_2020}, sparsification \cite{Bravo-Hermsdorff_Gunderson_2020}, and condensation \cite{gao2025}, differing in how they consolidate graph structures.
Graph coarsening reduces the graph size by clustering nodes into partitions in a way that minimizes a chosen reconstruction error. Graph sparsification reduces the complexity of the graph by removing a subset of nodes and edges. 
Graph condensation learns small graphs that synthesize aspects of the original graph, including node features and topology.
\textit{We focus in this work on graph coarsening.}

Graph coarsening has a history dating back to at least \cite{kron1939tensor}, where it was used to reduce the complexity of electrical circuits. 
In scientific computing, it has been used to develop multigrid methods for solving differential equations \cite{chen2022graph}.
Moreover, graph coarsening algorithms have been employed for learning network node embeddings \cite{chen2017harp} and neighborhood pooling in GNNs \cite{ying2018}. 
For further elaboration on the history of graph coarsening, see ~\cite{chen2022graph}.
Central to the design of all coarsening algorithms, no matter the application, is the question of what notion of similarity to preserve between the original graph and its coarsened counterpart.
In graph data science, there has been an emphasis on preserving some notion of spectral similarity \cite{Jin_Loukas_JaJa, Bravo-Hermsdorff_Gunderson_2020}, typically with respect to a graph's Laplacian representation. 
While many interesting graph properties can be derived that are related to their spectra \cite{chung1997spectral}, there are some properties that cannot be (several graphs may have the same spectrum, i.e., \textit{cospectral graphs} \cite{VanDam_2003}). 

In recent years, a metric that has garnered much interest in graph data science and, more recently, in graph coarsening is the Gromov-Wasserstein (GW) distance. 
The GW  distance \cite{Chowdhury_Memoli_2019} is appealing for use in graph mining for several reasons.
First, the GW distance can be computed between unaligned graphs of different sizes, unlike the commonly used Frobenius norm distance or the Bures-Wasserstein distance \cite{bhatia2017}.
Second, the GW distance produces an alignment (transport plan) between nodes across graphs, finding utility in applications including graph matching and partitioning \cite{Xu_Luo_Zha_Carin_2019}.  
While solving for such an alignment is generally NP-hard, many algorithms have been proposed to efficiently approximate the alignment \cite{Xu_Luo_Carin_2019}.
Third, we can easily compute geodesics between pairs and barycenters between groups of networks. 
To this end, GW geodesic and barycenter-based methods have been used for applications including data augmentation \cite{zeng2024graph}, and graph clustering \cite{Xu_2020}.
Finally, the Gromov-Wasserstein distance induces an equivalence relationship between networks of different sizes, which can be used to reduce the size of a graph to its minimal representative without any loss in information.

GW-based graph coarsening has been previously considered using the signless Laplacian representation, where it was shown that the GW distance can be bounded by a spectral distance~\cite{Chen_2023}.
In this case, the coarsening problem is approximated using the weighted kernel $k$-means algorithm.
Moreover, \cite{Chen_2023} experimentally demonstrates that prioritizing spectrum preservation may not be ideal for tasks such as graph classification, lending merit to the utility of GW distance preservation.
We focus on developing graph coarsening algorithms that minimize GW distance, though we put no restrictions on the graph representation used.
Towards this goal, we propose two algorithms, Greedy Pair Coarsening (GPC) and $k$-means Greedy Pair Coarsening (KGPC), which exploit the similarity of a pair of nodes, as characterized by merging distortion, to coarsen graphs.
We summarize our contributions as follows:
\begin{enumerate}[ leftmargin=2.0em,left=-1pt,itemsep=1pt, parsep=-1pt, topsep=-0pt, partopsep=-1pt]
\item We propose an iterative graph coarsening method that, under appropriate assumptions, is guaranteed to recover the smallest representation of a measure network within its weak isomorphism class. 
\item We propose a novel network representation, based on the distortion induced by merging node pairs, for use in a $k$-means clustering method, providing a more efficient alternative to the iterative method.
\item We corroborate the utility of our algorithms by comparing the distortion induced by coarsening with that of several established algorithms and their performance on downstream tasks, such as graph classification.
\end{enumerate}

\section{Preliminaries} \label{sec:prelims}
\textbf{Graphs:} A (weighted) graph $G = (V, E, w)$ is a triplet consisting of a finite set of nodes $V = \{v_1, \cdots, v_n\}$, a set of edges $E \subseteq V \times V$, and a mapping $w: E \to \mathbb{R}$ of edges to real numbers. 
A graph is undirected if $w(v_i, v_j) = w(v_j, v_i)$ for all $(v_i, v_j) \in E$.
We can represent a graph by its adjacency matrix $A \in \R^{n \times n}$ where $a_{ij} = w(v_i, v_j)$ if $(v_i, v_j) \in E$; otherwise, $a_{ij} = 0$.
Note that if $G$ is undirected, its adjacency matrix is symmetric.
The adjacency matrix is but one choice of representation for graphs; undirected graphs without self-loops (i.e. $(v,v) \notin E$ for all $v \in V$) are often represented by their Laplacian matrix $L = D-A$, where $D = \text{diag}(A \one_n)$, and its variants, including the signless Laplacian $L' = D+A$ which is used in \cite{chen2022graph}.
We denote the choice of matrix representation for a graph by $S \in \mathbb{R}^{|V|\times |V|}$ and the weight of the edge from $v_i$ to $v_j$ by $s_{ij}$.

\textbf{Basics of Gromov-Wasserstein Pseudo-Metrics:} The Gromov-Wasserstein (GW) ``distance'' \cite{sturm2006geometry, Memoli_2011} is a pseudometric on the space of measure networks \cite{Chowdhury_Memoli_2019}.
A finite (measure) network $(X, W_X, \mu_X)$ is a triplet consisting of a finite set $X$ of nodes, a weight function $W_X : X \times X \to \R$, and a fully-supported probability measure $\mu_X$. 
Abusing notation, we also denote measure networks by $(S, \mu)$, where $S \in  \R^{|V|\times|V|}$ is a matrix for which $s_{ij} = S_X(x_i, x_j)$ and $\mu=\mu_X$.
We denote the set of all finite measure networks by $\mathcal{N}$ and of $N$-node measure networks by $\mathcal{N}_N$.
Hereafter, we use the terms measure network and network interchangeably.

A (measure) coupling $\pi$ of two networks $X$ and $Y$ is a probability measure on the product space $X \times Y$ satisfying the marginal constraints $\mu_Y(y) = \sum_{x_i} \pi(x_i, y)$ and $\mu_X(x) = \sum_{j} \pi(x, y_j)$.
We denote the set of all such couplings by $\Pi(\mu_X, \mu_Y)$.
The distortion of $X$ and $Y$ with respect to the measure coupling $\pi \in \Pi(\mu_X,\mu_Y)$ is defined by
\begin{align}\label{eq:distort}
\dis^2(\pi) = \langle \mathcal{L}^2_2(X,Y) \otimes \pi, \pi\rangle.
\end{align}
where $\mathcal{L}_2^2(X, Y) \otimes \pi$ is the tensor product defined by
\vspace{-1 mm}
\begin{align} \label{eq:distort_full}
[\mathcal{L}_2^2(X, Y) \otimes \pi]_{ik} = \sum_{jl} \lvert S_X(x_i, x_j)- S_Y(y_k, y_l)\rvert^2 \pi(x_j, y_l),
\end{align}
and $\langle \cdot, \cdot \rangle$ is the Frobenius inner product \cite{Peyre_Cuturi_Solomon_2016}.
The ($L^2$-) GW distance between networks $X$ and $Y$ is then
the distortion of the infimizing coupling
% \vspace{-2 mm}
\begin{align}\label{eq:gw_dist}
\dgw(X, Y) = \inf_{\pi \in \Pi(\mu_X, \mu_Y)} \dis^2(\pi),
% \vspace{-3 mm}
\end{align}
implying that $d_{GW}(X,Y) \leq \dis(\pi)$ for any $\pi \in \Pi(\mu_X,\mu_Y)$.
Equipped with the GW distance, 
$(\NN, d_{GW})$ is a pseudometric space \cite{Chowdhury_Memoli_2019}, i.e., it is a metric space up to weak isomorphism (see Appendix \ref{sec:weakiso}).
We use measure networks to model graphs; 
the graph $(V, E, S)$ is represented by the measure network $G=(V, S, \mu)$, where $\mu$ can be chosen to reflect the relative importance of nodes.
We set $\mu$ to the uniform measure over $V$ by default.
We denote the vector of weights emanating to/from a node $v \in V$ by 
$S(v) = \begin{bmatrix}S(v, v_1), \cdots, S(v, v_n)\end{bmatrix}.$

\textbf{Graph Coarsening:}
Graph coarsening partitions the node set $V$ into $M$ sets, where $|V|=N > M$.
We denote this mapping by $p: V \to \{1, \cdots, M\}$ and the set of nodes in the $j$-th partition set, or supernodes,  by $P_j = p^{-1}(j)$.
We can encode such a partition by an assignment matrix $C_p \in \{0,1\}^{N \times M}$ where 
\vspace{-1 cm}

\begin{align}    
C_p(i,j) = 
\begin{cases} 
1, & \text{if } v_i\in P_j\\
0, & \text{otherwise.} 
\end{cases}
\end{align}

\vspace{-0.3 cm}
We denote the space of all assignment matrices from $N$-node to $M$-node networks by $\C_{N,M}$.
Given an assignment matrix $C_p \in \C_{N,M}$, we can form the average coarsening matrix $\C_w$ defined as 
\begin{align*}
C_w = \diag(\mu) C_p\diag(1/\mu'),
\end{align*}
where $\mu' = C_p^\top \mu \in \R^M$ and $C_w \in \mathbb{R}^{N\times M}$.
Coarsened graphs are constructed using average coarsening matrices as follows:
\begin{align} \label{eq:bary}
S'=C_w^\top S C_w, \quad \text{ and  } \quad  \mu' = C_p^\top \mu.
\end{align}

\noindent The coarsened matrix representation $S'$ consolidates edge weights by taking a convex combination of the weights being merged with respect to the relative mass of the nodes being merged, see~\eqref{eq:coarsened_weight} for an explicit characterization.
Moreover, it was shown in \cite[Appendix B.3]{Chen_2023} that the transformation of $S$ defined in Eq. \eqref{eq:bary} is a semi-relaxed Gromov-Wasserstein barycenter \cite{vincentcuaz_2022} (see Appendix \ref{sec:gw-sketching}).
\vspace{-3 mm}

\section{Gromov-Wasserstein Graph Coarsening} \label{sec:method}
Given an $N$-node measure network $G = (S, \mu)$, the Gromov-Wasserstein coarsening problem seeks an assignment matrix $C_p^*$ that solves
\begin{align} \label{eq:coarsening_formulation}
C_p^* 
& = \argmin_{C_p \in \C_{N,M}} 
\lVert 
(S- C_p {C}_w^\top S {C}_w {C}_p^\top)\odot(\mu\mu^\top)^{1/2}
\rVert_F^2.
\end{align} 
Recall that coarsening reduces the size of a network by finding an optimal partition $\P = \{P_1, \cdots, P_M\}$ merging the nodes in these subsets to produce the coarsened network $G'$. 
If $C_p$ encodes the assignment of nodes to partition sets (supernodes), the matrix $\pi = \diag(\mu) C_p$ is a transport plan from $G$ to $G'$, for any $G' \in \N_M$ that is formed by merging nodes in $G$ (i.e., no mass splitting).
Given the assignment $C_p$, we can construct the measure coupling $\pi = \diag(\mu) C_p$ and it was shown in \cite[Appendix B.3]{Chen_2023} that
\begin{align*}    
G' &=  
\argmin_{G' \in \N_M} \langle \mathcal{L}_2^2(G, G') \otimes \diag(\mu) C_p, \diag(\mu) C_p\rangle \\ & =
(C_w^\top S C_w, C_p^\top \mu);
\end{align*}

\noindent in other words, $G'$ minimizes $\dis(\pi)$ for a fixed $C_p$. 
Therefore, minimizing the distortion over $C_p$ gives the best coarsening. 
Moreover, it can be shown that $d_{GW}(G', G^{''}) = 0$ when $G^{''} = (C_p C_w^\top S C_w C_p^\top, \mu)$.
Taken together, we get Eq.~\eqref{eq:coarsening_formulation};
this formulation is closely related to the Gromov-Wasserstein sketching problem \cite{memoli_sidiropoulos_singhal_2018} which, given $G \in \N_N$, seeks the network $G' \in \N_M$ closest to $G$, or $G' = \argmin_{G' \in \N_M} d_{GW}(G, G')$.
The connection between coarsening and sketching is expounded on in Appendix \ref{sec:gw-sketching}.

We propose two approaches to tackle problem \eqref{eq:coarsening_formulation}, both of which are based on the distortion induced by merging a pair of nodes. We leverage the intuition that nodes within the same partition should have similar neighborhoods. 
The first method greedily merges node pairs by choosing the coarsening from $N$ to $N-1$ nodes with minimal discrepancy.
The second method derives a graph representation from the distortion induced by merging node pairs, which partitions nodes using the $k$-means algorithm.

\subsection{Greedy Pair Coarsening (GPC)}
The first method we propose to solve Problem  \eqref{eq:coarsening_formulation} is Greedy Pair Coarsening (GPC).
First, note that, given a network $G = (S, \mu)$, we can reformulate Problem \eqref{eq:coarsening_formulation} as 
\vspace{-2 mm}
\begin{align} \label{eq:coarse_reformulation}
\min_{E_p^1, \cdots, E_p^{N-M}} 
\lVert 
(S - R^\top S R)\odot(\mu\mu^\top)^{1/2}
\rVert_F^2,
\end{align}
\vspace{-0.7 mm}
\noindent where $M$ is the desired coarsening level, $R = C_w C_p^\top$, 
\begin{align*}    
C_p & = E_p^{(1)} \cdots E_p^{(N-M)} 
&
C_w & = \diag(\mu) C_p \diag(1/C_p^\top \mu)
\end{align*} 
and where each $E_p^{(j)} \in \C_{N-j+1,N-j}$ corresponds to the merging of a pair of (super) nodes.

\noindent We can approximate a solution to Problem \eqref{eq:coarse_reformulation} by solving a sequence of greedy optimization problems to produce $(E_p^{(i)})_{i=1}^{N-M}$, each of which minimizes some intermediate cost.
In particular, given the first $i$-$1$ assignment matrices $E_p^{(1)}, ..., E_p^{(i-1)}$, we let $C_p^{(i-1)} = E_p^{(1)} \cdots E_p^{(i-1)}$, and solve 
\begin{align}
E_p^{(i)} &= \argmin_{E_p \in C_{N-i+1,N-i}} F(E_p) \label{eq:greedy_coarse}\\ 
F(E_p) & := 
\lVert (S - S^{(i)}(E_p)) \odot (\mu\mu^\top)^{1/2} \rVert_F^2 \notag \\
S^{(i)}(E_p) & =R^{(i)}(E_p)^\top S R^{(i)}(E_p) \notag
\\
R^{(i)}(E_p) & = C_w^{(i)}(E_p) (C_p^{(i)}(E_p))^\top \notag \\
C_p^{(i)}(E_p) &= C_p^{(i-1)} E_p \notag \\ 
C_w^{(i)}(E_p) &= \diag(\mu) C_p^{(i)}(E_p) \diag(1/(C_p^{(i)}(E_p))^\top \mu) \notag,
\end{align}
where $\mu \in \mathbb{R}_+^N$ and $\mu^\top 1_N = 1$.
Performing this optimization $N-M$ times, we obtain a feasible assignment matrix $C_p^{(N-M)} = E_p^{(1)} \cdots E_p^{(N-M)} \in \C_{N,M}$ for Problem~\eqref{eq:coarsening_formulation}.
Since $\mathcal{C}_{N-i+1,N-i}$ is a discrete set with $\binom{N-i+1}{2}$ elements, we can determine the minimizing transport plan for Eq. \eqref{eq:greedy_coarse} directly by comparing distortions.
\vspace{-0.25 mm}
Note that when the distortion induced by a node merging is zero, the coarsened network remains in the weak isomorphism class of the original network (see Proposition \ref{prop:weak_iso}).
Once the minimal representative is achieved, we must merge nodes that incur some distortion/error. 
This process is repeated until a network of the desired size is recovered.
As a first theoretical result, we show that the GPC algorithm produces the minimal representative of the weak isomorphism class of a measure network.
\begin{prop} \label{prop:weak_iso}
Given a measure network $G$, GPC recovers the smallest network weakly isomorphic to $G$.
Moreover, when $k$ is the size of the minimal representative, GPC($G, N-k$) solves Problem~\eqref{eq:coarsening_formulation}.
\end{prop}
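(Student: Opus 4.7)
My plan is to reduce both claims to a merge-level characterization of weak isomorphism together with a confluence property of zero-distortion pair merges. First, I would make explicit the distortion $F(E_p)$ associated with a single-pair merge $E_p \in \C_{N,N-1}$. Using \eqref{eq:distort_full} with $\pi = \diag(\mu)E_p$, the only rows and columns of $S - R^\top S R$ that can contribute are those indexed by the merged pair $(v_i,v_j)$, and a direct expansion gives $F(E_p)$ as a nonnegative weighted sum of squared differences of the form $|S(v_i,v_\ell) - S(v_j,v_\ell)|^2$ (plus the symmetric transpose term), with coefficients proportional to $\mu_i\mu_j$. In particular, $F(E_p) = 0$ iff $S(v_i) = S(v_j)$ (and the analogous equality for the $j$-th column when $S$ is non-symmetric), i.e.\ iff $v_i$ and $v_j$ have identical neighborhoods in $S$.

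Next, I would introduce the equivalence relation $v_i \sim v_j$ on nodes defined by identical neighborhoods, and cite (or rederive from the distortion characterization) the standard fact that the quotient network $G^\star = G/\!\!\sim$ is the smallest element in the weak isomorphism class of $G$. The core technical step is a confluence lemma: if $v_i \sim v_j$, then merging them produces a network whose nontrivial $\sim$-classes are exactly the contractions of the original classes. This is immediate from the barycentric update \eqref{eq:bary}: the merged supernode inherits the common neighborhood vector of $v_i, v_j$, and inter-class weights are unchanged. Consequently, any greedy sequence of zero-distortion pair merges within $\sim$-classes will expose the same collection of zero-distortion opportunities after each step, and the greedy choice can never sacrifice a future zero-distortion merge.

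Armed with these two ingredients, the first claim follows by induction on $N - |G^\star|$. So long as the current graph has any nontrivial $\sim$-class, some $E_p$ attains $F(E_p)=0$; since $F\ge 0$ this is a global minimizer of \eqref{eq:greedy_coarse}, and GPC selects such a pair. By the confluence lemma, the resulting graph still has minimal representative of size $|G^\star|$, and induction concludes. For the second claim, with $M=k:=|G^\star|$, each of the $N-k$ intermediate merges has distortion zero. A telescoping argument on the cumulative plan $\pi^{(N-k)} = \diag(\mu)C_p^{(N-k)}$, using that $R^{(i)}$ respects the composition of consecutive pair merges, shows that the total distortion equals $\dis^2(\pi^{(N-k)}) = 0$. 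Since the objective of \eqref{eq:coarsening_formulation} evaluated at $C_p^{(N-k)}$ equals $\dis^2(\pi^{(N-k)})$ and is nonnegative, GPC attains the optimum.

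The main obstacle is the confluence property and, relatedly, the closed-form identification of $F(E_p)$ for a single pair merge. The rest of the argument is structural, but without confluence one could worry that a particular sequence of zero-distortion merges becomes stuck at a coarsening strictly larger than $G^\star$; handling this is what forces the equivalence-class bookkeeping, and is the only place where the specific barycentric form of $C_w$ in \eqref{eq:bary} is essential.
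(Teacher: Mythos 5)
Your argument is correct and follows the same overall strategy as the paper's proof: merge zero-distortion pairs (which keep the network inside its weak isomorphism class), iterate until no such pair remains, identify the terminus with the minimal representative, and observe that the objective of \eqref{eq:coarsening_formulation} evaluated at the composite assignment is zero. The difference is in how the two key steps are justified. The paper's proof is citation-driven and quite terse: it invokes the facts that distortion upper-bounds $d_{GW}$, that $d_{GW}=0$ characterizes weak isomorphism, and that minimal representatives are unique up to strong isomorphism (Chowdhury--M\'emoli), and then simply asserts that once no zero-distortion pair exists ``we have reached a minimal representative.'' You instead work directly from the closed form of $F(E_p)$ for a single pair merge (which matches Lemma~\ref{lemma:1}: the distortion vanishes iff $S(v_i)=S(v_j)$), identify the minimal representative with the quotient by the identical-neighborhood relation, and prove a confluence lemma showing that a zero-distortion merge maps $\sim$-classes to $\sim$-classes without creating or destroying equivalences. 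That confluence step is precisely the point the paper leaves implicit --- without it one must rule out that a greedy sequence of zero-distortion merges strands the algorithm at a network strictly larger than the terminal one --- so your version is more self-contained and closes a genuine gap in the paper's exposition, at the cost of being longer. Your telescoping argument for the second claim (each zero-distortion merge forces $s_{uv}=s'_{\phi(u)\phi(v)}$ pointwise, so the composite assignment has zero objective) is likewise a more explicit rendering of the paper's one-line appeal to ``distortion upper-bounds the GW distance.'' Both routes are sound.
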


\noindent For networks admitting a node partition $\P$ for which nodes in the same partition set (supernode) have sufficiently similar weights, and nodes in different supernodes have sufficiently different weights, GPC produces a network that recovers $\P$.

\begin{prop} \label{prop:gpc_convergence}
Let $G = (V, S, \mu)$ be a symmetric $N$-node network whose nodes can be partitioned into sets $\mathcal{P} = \{P_1, \cdots, P_M\}$ for which there exist $\epsilon > 0$, $\alpha > 4+4\sqrt{N^2/(N-1)}$, satisfying $\lVert S(u_1) - S(u_2)\rVert_\infty < \epsilon$ for all $u_1, u_2 \in P_i$ and $\inf_{u\in V} |s(u_1, u)-s(u_2, u)| \ge \alpha\epsilon$  for $u_1 \in P_i, u_2 \in P_j$ for $i \neq j$. 
Then, for $G'=\text{GPC}(G, N-M)=(V^{(N-M)}, S^{(N-M)}, \mu^{(N-M)})$, we have $V^{(N-M)} = \P$.
\end{prop}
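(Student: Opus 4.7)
The plan is to induct on the iteration index $i = 1, \ldots, N-M$, maintaining the invariant that after $i$ steps of GPC, (a) every current supernode is contained in a single $P_k$, and (b) the incident-weight vectors of the current supernodes satisfy the same $\epsilon$ within-class and $\alpha\epsilon$ across-class separation as the original nodes. If both properties hold throughout, the greedy rule picks a within-class merge at every step, and exactly $N-M$ such merges reduce each $P_k$ to a single supernode, giving $V^{(N-M)} = \mathcal{P}$.

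For the single-step cost comparison I would first obtain a closed form for $F(E_p)$ when a pair $(u_1, u_2)$ with current masses $m_1, m_2$ is merged. Using the weighted average $\bar{s}(j) = (m_1 s(u_1,j) + m_2 s(u_2,j))/(m_1+m_2)$ and the symmetry of $S$, the off-diagonal contribution simplifies to $\tfrac{2 m_1 m_2}{m_1+m_2}\sum_{j\notin\{u_1,u_2\}} \mu_j (s(u_1,j)-s(u_2,j))^2$, while the $2\times 2$ block contribution expands in the three differences $s_{11}-s_{12}$, $s_{12}-s_{22}$, $s_{11}-s_{22}$. A within-class merge then admits an upper bound proportional to $\epsilon^2$ (via the hypothesis and the triangle inequality $|s_{11}-s_{22}|<2\epsilon$), while a cross-class merge admits a lower bound proportional to $\alpha^2\epsilon^2$ coming from the off-diagonal alone. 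I expect the condition $\alpha > 4 + 4\sqrt{N^2/(N-1)}$ to arise precisely when one forces the within-class upper bound to be strictly below the cross-class lower bound, uniformly over mass profiles $(m_1, m_2)$ reachable during execution: the constant $4$ captures the slack contributed by the $2\times 2$ block, while the $\sqrt{N^2/(N-1)}$ factor comes from the extremal ratio of supernode-to-complement masses attainable after prior merges.

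The main obstacle will be proving invariant (b). Closure of the within-class bound is immediate: the averaged incident-weight vector of a merged supernode is a convex combination of pairwise $\epsilon$-close vectors, and therefore remains $\epsilon$-close to every other vector in the class. Closure of the across-class bound requires a sign-consistency argument: for any $v$ and any $u_3 \in P_j$ with $j \neq k$, both $s(u_1,v) - s(u_3,v)$ and $s(u_2,v) - s(u_3,v)$ have magnitude at least $\alpha\epsilon$; if they had opposite signs, then $|s(u_1,v) - s(u_2,v)| \geq 2\alpha\epsilon > \epsilon$, contradicting the within-class hypothesis on $u_1, u_2$ (which holds since the given bound implies $\alpha > 1$). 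Hence the two differences share a sign, and their mass-weighted convex combination $\bar{s}(v) - s(u_3,v)$ inherits magnitude at least $\alpha\epsilon$, preserving the across-class separation for the new supernode. Combining this closure with the single-step cost comparison closes the induction, so GPC executes exactly $N-M$ within-class merges and recovers $V^{(N-M)} = \mathcal{P}$.
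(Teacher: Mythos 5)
Your proposal follows essentially the same route as the paper: induct on greedy merges, derive a closed form for the pairwise merge distortion (your off-diagonal term $\tfrac{2m_1m_2}{m_1+m_2}\sum_j \mu_j(s(u_1,j)-s(u_2,j))^2$ is exactly the $D_2+D_3$ term of the paper's Lemma~3, and the $2\times2$ block is its $D_1$), then separate an $O(\epsilon^2)$ upper bound for within-class merges from an $\Omega((\alpha-4)^2\epsilon^2)$ lower bound for cross-class merges, with the threshold on $\alpha$ coming from extremizing the mass ratio (the paper's Lemma~5 finds the extremum at $\mu_3=1/N$, yielding $4+4N/\sqrt{N-1}=4+4\sqrt{N^2/(N-1)}$). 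The one structural difference is your invariant (b): you propagate the $\epsilon$/$\alpha\epsilon$ separation to the supernode weight vectors themselves, which forces you to prove closure under averaging and motivates your sign-consistency argument. The paper avoids this entirely: its inductive hypothesis is only that all prior merges were consistent, and its Lemma~4 re-derives bounds on the coarsened weights $s'$ directly from the original hypotheses on $G$ (via telescoping through a representative original node), so no closure of the separation bounds is ever needed. Your sign-consistency argument is correct and arguably cleaner conceptually, but it is extra work. One point to fix: you claim the cross-class lower bound comes ``from the off-diagonal alone,'' but $\sum_{j\notin\{u_1,u_2\}}\mu_j = 1-m_1-m_2$ can be as small as $1/N$, so the off-diagonal term by itself gives a lower bound degraded by a factor of order $N$ and would not reproduce the stated threshold on $\alpha$; the paper's lower bound retains the $(\alpha-4)^2$ contributions from the $2\times2$ block (the $\mu_1^4+\mu_3^4$ and $\mu_1^3\mu_3$ terms) precisely to cover the regime where the complementary mass is small. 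With that repaired, your argument closes.
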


\subsection{$k$-means  Greedy Pair Coarsening (KGPC)} 

While GPC is guaranteed to find the minimal representative of a measure network, its time complexity is $O(N^4)$, as we must compute pairwise distortions for each iteration.
To remedy this, we propose $k$-means Greedy Pair Coarsening (KGPC), which runs at $O(N^2 +TNM^2)$ where $T$ bounds the number of $k$-means iterations and $M$ is the desired coarsening size.
As with GPC, we characterize node similarity using the distortion induced by merging node pairs.
The goal then is to group nodes with similar induced node pair merging distortion within the same partition.
Towards this, we construct a matrix $H \in \mathbb{R}_+^{|V| \times |V|}$ where $H_{ij} = \dis(\pi^{ij})$, where $\pi^{ij}$ is the transport plan merging nodes $v_i$ and $v_j$.
Equipped with $H$ and assuming $\mu=\mathbf{1}_N/N$ we can solve for the assignment matrix $C_p^*$ by
\begin{align} \label{eq:kgpc_objective}
C_p^* & = 
\argmin_{C_p \in \mathcal{C}_{N,M}}
\lVert H - {C}_p {C}_w^\top H {C}_w {C}_p^\top\rVert_F^2.
\end{align}
We have observed this derived representation to be especially useful for finding node partitions using the $k$-means algorithm.
Moreover, while there are currently no theoretical guarantees that this method minimizes ~\eqref{eq:coarsening_formulation} in the general case, the algorithm's performance seems to indicate that the cost function, Eq. \eqref{eq:kgpc_objective}, can be upper-bounded by the first ordered differences comprising $H$.

\hfill
\begin{minipage}{0.46\textwidth}
\begin{algorithm}[H]
  \caption{$k$-means Greedy Pair Coarsening (KGPC)} \label{alg:spectral}
  \begin{algorithmic}[1]
    \Statex
    \Function{\text{KGPC}}{$(S, \mu),  k$}
    \Let{$H_{ij}$}{$\dis(\pi^{ij})$}
    \Let{$C_p$}{$k$-means($H$, $k$)}
    \Let{$C_w$}{$\diag(\mu)C_p\diag(1/(C_p^\top\mu))$}
    \Let{$(S', \mu')$}{$(C_w^\top S C_w, \ C_p^\top \mu)$}
    \State \Return{$G=(S', \mu')$}
    \EndFunction
  \end{algorithmic}
\end{algorithm}
\end{minipage}

\vspace{-5 mm}
\section{Numerical Analysis} \label{sec:exp}

In this section, we detail the experiments conducted to validate the utility of GPC and KGPC -- all experimental code can be found here: \href{https://github.com/ctaveras1999/graph-coarsening}{https://github.com/ctaveras1999/graph-coarsening}.
We contextualize the performance of our methods against the Multi-level Graph Coarsening (MGC) and Spectral Graph Coarsening (SGC) algorithms proposed in \cite{Jin_Loukas_JaJa} and the Kernel Graph Coarsening (KGC) algorithm proposed in \cite{Chen_2023}.
To our knowledge, \cite{Chen_2023} is the only other coarsening method with the explicit aim of minimizing a GW distance. 
Their method requires the use of the signless Laplacian representation.
We represent graphs by their adjacency matrix, though our method imposes no restriction on representation.

We perform two experiments: 1) we compare the reconstruction error of the different methods to quantify coarsening quality, and 2) we leverage the GW dictionary learning method proposed in \cite{Xu_2020} for graph classification.
Throughout these experiments, we use several well-established graph datasets, namely the IMDB-Binary \cite{Yanardag_2015}, Mutag \cite{Debnath_1991}, Proteins \cite{Borgwardt_2005}, 
Enzymes \cite{Borgwardt_2005, Schomburg_2004}, 
PTC-MR \cite{morris_2020}, and 
MSRC \cite{Neumann_2014} datasets.

\vspace{1mm}
\noindent
\textbf{Quantifying Reconstruction Error:} The goal of this experiment is to determine which of the aforementioned methods best preserves the structure of the adjacency matrix as measured by the coarsening-induced distortion.
For each graph in a given dataset, we coarsen at various levels (15\% and 85\% of all nodes in 5\% increments), then compute the distortion between the graphs and their coarsenings.
At each coarsening level, we compute the distortion induced by coarsening and average over all such distortions.
This average distortion is then treated as a measure of coarsening quality as a function of coarsening level.
The results of this experiment for the MSRC dataset are summarized in Figure~\ref{fig:exp1}. 
In it, we can see that for MSRC and Enzymes, our algorithms achieve the lowest, or nearly the lowest, distortion across coarsening levels.
As the coarsening level increases, the difference between the methods becomes less pronounced, leading to little meaningful difference between the methods. 
This suggests that the graphs cannot be well-approximated by graphs with $80\%$ or more of the nodes coarsened.

\vspace{-4mm}
\begin{figure}[H]
    \centering
    \includegraphics[scale=0.28]{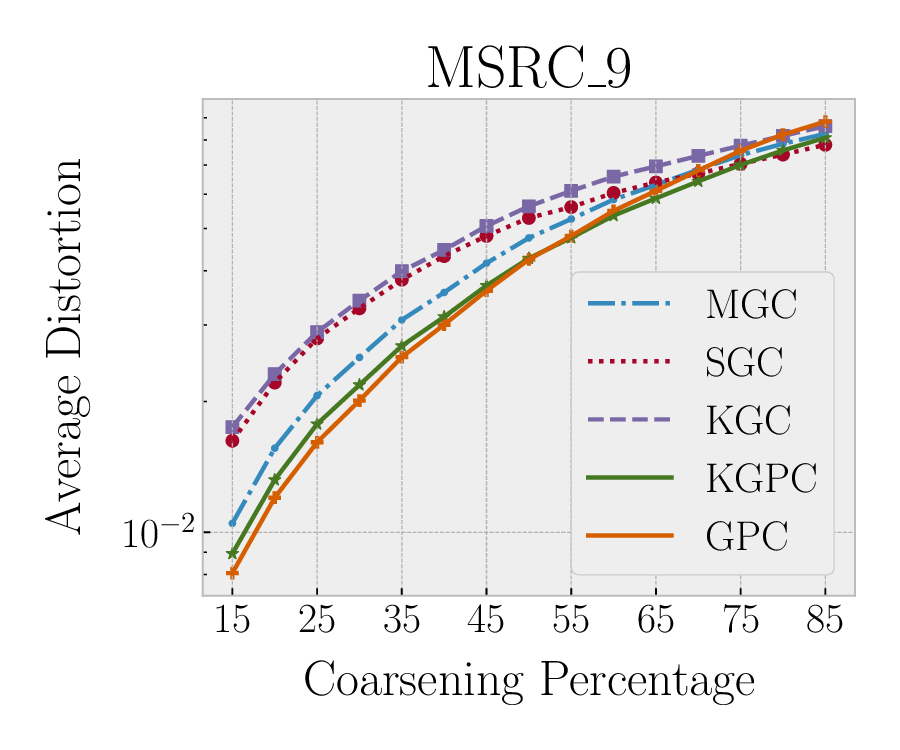}
    \hspace{-4 mm}
    \includegraphics[scale=0.28]{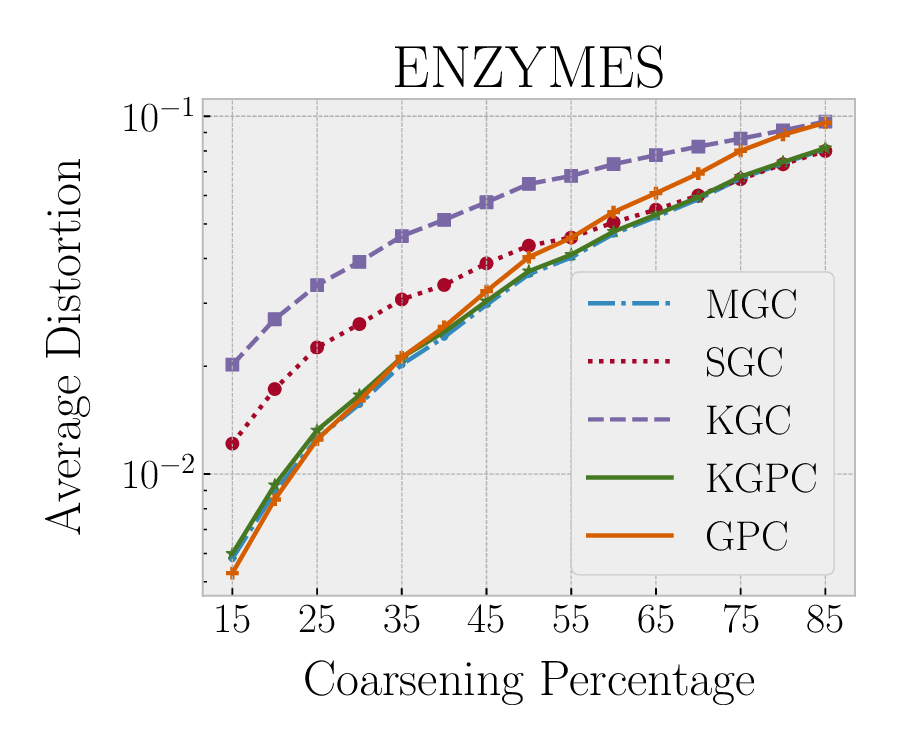}
    \vspace{-4mm}
    \includegraphics[scale=0.28]{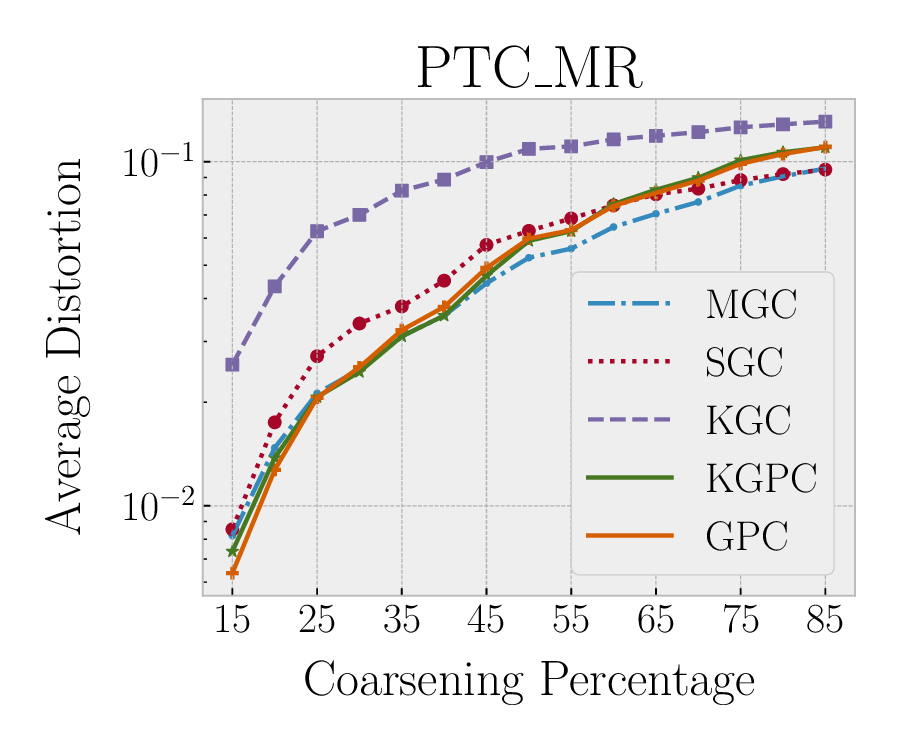}
    \hspace{-4 mm}
    \includegraphics[scale=0.28]{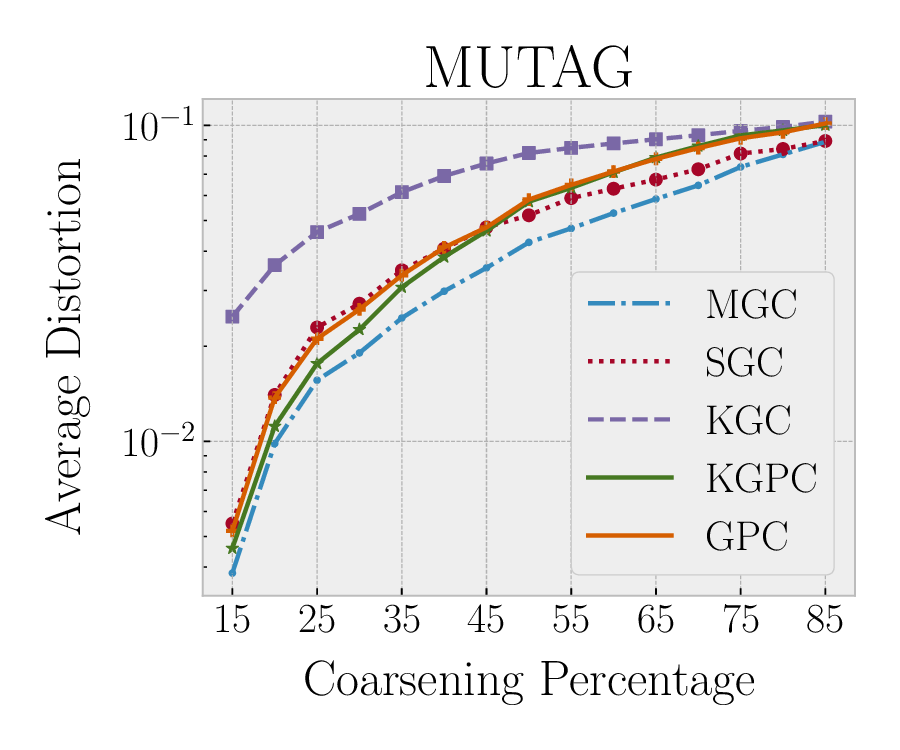}
    \caption{For each method and graph in a dataset, we coarsen between 15\% and 85\% of nodes, and average the distortion over all graphs. 
    GPC and KGPC achieve the overall lowest or near-lowest distortion on MSRC, Enzymes, and PTC-MR, whereas 
    MGC performs best on MUTAG.}
    \label{fig:exp1}
\end{figure}

\vspace{1mm}
\noindent
\textbf{Graph Classification via Clustering:} For this experiment, we leverage the Graph Dictionary Learning method (FGWF) proposed in \cite{Xu_2020} for unsupervised graph classification. 
Given a set of graphs, $\mathcal{G} = [G_i]_{i=1}^{N}$, the objective of this method is to learn a set of atoms $[B_j]_{j=1}^{M}$ and weights $[\lambda_{i}]_{i=1}^{N}$ such that barycenters formed by the dictionary atoms can well-approximate graphs in the dataset.
We initialize the dictionary with 15 graphs randomly sampled from the dataset and randomly initialize the weights $\lambda_{i}$ associated with each graph in the dictionary. 
Model parameters are updated using the Adam optimizer.
Individual FGWF models were trained for 15 epochs with a learning rate of $0.01$.
After training, we classify graphs by clustering their associated weight values using the $k$-means algorithm.

The goal of this experiment is to determine how faithfully the different coarsening methods represent the original data. 
To achieve this, we reduce all data to 40\% of its original size. For each combination of coarsening method and dataset, we train separate FGWF models and compute the Rand Index of the respective classification results.
Table \ref{table:exp2} reports the average and standard deviation of the Rand Index for each combination of dataset and algorithm, and indicates that GPC and KGPC are, overall, the most compatible with the FGWF model \cite{Xu_2020} for graph classification. 
The Rand index for our algorithms was on par with or better than the ones produced by the original graph. 
This may indicate that the coarsening step helps remove extraneous structure in the graph that does not aid in classification. 

\begin{table}
\hspace{-0.1 cm}
\adjustbox{max width=8.5 cm}{
\begin{tabular}{lrrrrrr}
Methods\textbackslash Datasets & \multicolumn{1}{l}{IMDB-B} & \multicolumn{1}{l}{MUTAG} & \multicolumn{1}{l}{Proteins} & \multicolumn{1}{l}{MSRC-9} 
% & \multicolumn{1}{l}{ENZYMES} 
& \multicolumn{1}{l}{PTC-MR} \\ \hline
MGC \cite{Jin_Loukas_JaJa} & 
${50.7{\pm}0.2}$ & 
$50.1 {\pm} 1.4$  & 
$54.3{\pm}3.4$ & 
$\mathbf{78.0 {\pm} 0.3}$ & 
% $\mathbf{72.5 {\pm} 0.2}$ & 
$50.7 {\pm} 0.2$ \\
SGC \cite{Jin_Loukas_JaJa} & 
$50.5 {\pm} 0.2$ & 
$50.3 {\pm} 0.6$ & 
52.8 $\pm$ 3.6 & 
77.9 $\pm$ 0.3 & 
% 72.4 $\pm$ 0.1 & 
50.5 $\pm$ 0.2 \\
KGC \cite{Chen_2023} & 
$50.0 {\pm} 0.0$ & 
$51.1 {\pm} 1.4$ & 
$\mathbf{58.1 {\pm} 2.0}$ & 
$77.9 {\pm} 0.2$ & 
% $72.4 {\pm} 0.2$ & 
$50.0 {\pm} 0.0$ \\ 
\hline
GPC (Ours)  & 
$\mathbf{51.3{\pm}0.3}$
& $51.0 {\pm} 1.6$
& $55.7 {\pm} 2.3$
& $77.9 {\pm} 0.4$
% & $72.3 {\pm} 0.1$
& $\mathbf{51.3 {\pm} 0.2}$ \\
KGPC (Ours) & 
$50.8 {\pm} 0.3$ & 
\textbf{53.6 $\pm$ 2.5}      
& 56.7 $\pm$ 2.1                     
& $\mathbf{78.0 {\pm} 0.4}$      
% & 72.3 $\pm$ 0.1               
& 50.8 $\pm$ 0.3  \\ 
\hline
Original    
& $51.1 {\pm} 0.8$ & 
$50.6 {\pm} 0.6$ & 
$54.9 {\pm} 2.3$ & 
$77.9 {\pm} 0.2$ & 
% $72.3 {\pm} 0.1$ &
$51.1 {\pm} 0.8$                 
\end{tabular}
\\
\vspace{-8mm}}
\caption{ \label{table:exp2} Rand Index for coarsened graphs with $60$\% of nodes coarsened. For each combination of dataset and method, we coarsen the data, which we then use to train four FGWF models~\cite{Xu_2020} for graph classification.
We use the Rand Index to measure the quality of the classification results and find that for most datasets, either GPC or KGPC performs best.\vspace{-4mm}}
\end{table}

\vspace{-4 mm}
\section{Discussion} \label{sec:outro}
We proposed the GPC and KGPC algorithms for graph coarsening with respect to the Gromov-Wasserstein distance.
We conduct two experiments, where results show that our methods generally produce coarsening with less distortion and better discriminability for classification tasks.
These results provide evidence that the proposed methods better leverage the structure granted by the GW geometry than others.
Moreover, these experimental results showcase the viability of graph coarsening in practical applications.

Future research should explore establishing an upper bound on the coarsening objective in terms of the distortion of first-order coarsening to supplement the development of the KGPC algorithm.
Other directions that this work opens up include incorporating graph features into the coarsening and better characterizing classes of graphs that can benefit most from coarsening.

\small
\section{Acknowledgments}
Part of this work is funded by the National Science Foundation under Grants \#2213568 and \#2443064. Research was partially sponsored by the Army Research Office and was accomplished under Grant Number W911NF-17-S-0002. The views and conclusions contained in this document are those of the authors and should not be interpreted as representing the official policies, either expressed or implied, of the Army Research Office or the U.S. Army or the U.S. Government. The U.S. Government is authorized to reproduce and distribute reprints for Government purposes notwithstanding any copyright notation herein.
\bibliographystyle{ieeetr}
% \bibliography{refs}

\begin{thebibliography}{10}
\bibitem{kipf2016semi}
T.~Kipf, ``Semi-supervised classification with graph convolutional networks,'' {\em arXiv preprint arXiv:1609.02907}, 2016.

\bibitem{xu2019GNNs}
K.~Xu, W.~Hu, J.~Leskovec, and S.~Jegelka, ``How powerful are graph neural networks?,'' 2019.

\bibitem{battiloro2025}
C.~Battiloro, E.~Karaismailoğlu, M.~Tec, G.~Dasoulas, M.~Audirac, and F.~Dominici, ``E(n) equivariant topological neural networks,'' 2025.

\bibitem{Petersen2019}
A.~Petersen and H.-G. M{\"u}ller, ``Fr{\'e}chet regression for random objects with {E}uclidean predictors,'' {\em The Annals of Statistics}, vol.~47, no.~2, pp.~pp. 691--719, 2019.

\bibitem{bronstein2021geometric}
M.~M. Bronstein, J.~Bruna, T.~Cohen, and P.~Veli{\v{c}}kovi{\'c}, ``Geometric {D}eep {L}earning: Grids, {G}roups, {G}raphs, {G}eodesics, and {G}auges,'' {\em arXiv preprint arXiv:2104.13478}, 2021.

\bibitem{Leus_2023}
G.~Leus, A.~G. Marques, J.~M. Moura, A.~Ortega, and D.~I. Shuman, ``Graph signal processing: History, development, impact, and outlook,'' {\em IEEE Signal Processing Magazine}, vol.~40, p.~49–60, June 2023.

\bibitem{wang2025drug}
C.~Wang, G.~A. Kumar, and J.~C. Rajapakse, ``Drug discovery and mechanism prediction with explainable graph neural networks,'' {\em Scientific Reports}, vol.~15, no.~1, p.~179, 2025.

\bibitem{fan2019}
W.~Fan, Y.~Ma, Q.~Li, Y.~He, E.~Zhao, J.~Tang, and D.~Yin, ``Graph neural networks for social recommendation,'' 2019.

\bibitem{wang_2022}
J.~Wang, S.~Zhang, Y.~Xiao, and R.~Song, ``A review on graph neural network methods in financial applications,'' {\em Journal of Data Science}, vol.~20, no.~2, pp.~111--134, 2022.

\bibitem{olshevskyi2024fully}
R.~Olshevskyi, Z.~Zhao, K.~Chan, G.~Verma, A.~Swami, and S.~Segarra, ``Fully distributed online training of graph neural networks in networked systems,'' {\em arXiv preprint arXiv:2412.06105}, 2024.

\bibitem{chen2022graph}
J.~Chen, Y.~Saad, and Z.~Zhang, ``Graph coarsening: from scientific computing to machine learning,'' {\em SeMA Journal}, vol.~79, no.~1, pp.~187--223, 2022.

\bibitem{Chen_2023}
Y.~Chen, R.~Yao, Y.~Yang, and J.~Chen, ``A gromov-wasserstein geometric view of spectrum-preserving graph coarsening,'' in {\em Proceedings of the 40th International Conference on Machine Learning}, ICML'23, JMLR.org, 2023.

\bibitem{Jin_Loukas_JaJa}
Y.~Jin, A.~Loukas, and J.~JaJa, ``Graph coarsening with preserved spectral properties,'' in {\em International Conference on Artificial Intelligence and Statistics}, pp.~4452--4462, PMLR, 2020.

\bibitem{Bravo-Hermsdorff_Gunderson_2020}
G.~Bravo~Hermsdorff and L.~Gunderson, ``A unifying framework for spectrum-preserving graph sparsification and coarsening,'' {\em Advances in Neural Information Processing Systems}, vol.~32, 2019.

\bibitem{gao2025}
X.~Gao, J.~Yu, T.~Chen, G.~Ye, W.~Zhang, and H.~Yin, ``Graph condensation: A survey,'' 2025.

\bibitem{kron1939tensor}
G.~Kron, {\em Tensor analysis of networks}.
\newblock J. Wiley \& Sons New York, 1939.

\bibitem{chen2017harp}
H.~Chen, B.~Perozzi, Y.~Hu, and S.~Skiena, ``Harp: Hierarchical representation learning for networks,'' 2017.

\bibitem{ying2018}
Z.~Ying, J.~You, C.~Morris, X.~Ren, W.~Hamilton, and J.~Leskovec, ``Hierarchical graph representation learning with differentiable pooling,'' {\em Advances in neural information processing systems}, vol.~31, 2018.

\bibitem{chung1997spectral}
F.~R. Chung, {\em Spectral graph theory}, vol.~92.
\newblock American Mathematical Soc., 1997.

\bibitem{VanDam_2003}
E.~R. {van Dam} and W.~H. Haemers, ``Which graphs are determined by their spectrum?,'' {\em Linear Algebra and its Applications}, vol.~373, pp.~241--272, 2003.
\newblock Combinatorial Matrix Theory Conference (Pohang, 2002).

\bibitem{Chowdhury_Memoli_2019}
S.~Chowdhury and F.~M{\'e}moli, ``The gromov-wasserstein distance between networks and stable network invariants,'' {\em Information and Inference: A Journal of the IMA}, vol.~8, p.~757–787, Dec. 2019.

\bibitem{bhatia2017}
R.~Bhatia, T.~Jain, and Y.~Lim, ``On the bures-wasserstein distance between positive definite matrices,'' 2017.

\bibitem{Xu_Luo_Zha_Carin_2019}
H.~Xu, D.~Luo, H.~Zha, and L.~C. Duke, ``Gromov-wasserstein learning for graph matching and node embedding,'' in {\em International conference on machine learning}, pp.~6932--6941, PMLR, 2019.

\bibitem{Xu_Luo_Carin_2019}
H.~Xu, D.~Luo, and L.~Carin, ``Scalable gromov-wasserstein learning for graph partitioning and matching,'' {\em Advances in neural information processing systems}, vol.~32, 2019.

\bibitem{zeng2024graph}
Z.~Zeng, R.~Qiu, Z.~Xu, Z.~Liu, Y.~Yan, T.~Wei, L.~Ying, J.~He, and H.~Tong, ``Graph mixup on approximate gromov{\textendash}wasserstein geodesics,'' in {\em Forty-first International Conference on Machine Learning}, 2024.

\bibitem{Xu_2020}
H.~Xu, ``Gromov-wasserstein factorization models for graph clustering,'' {\em Proceedings of the AAAI Conference on Artificial Intelligence}, vol.~34, p.~6478–6485, Apr. 2020.

\bibitem{sturm2006geometry}
K.-T. Sturm, ``On the geometry of metric measure spaces,'' {\em Acta Mathematica}, vol.~196, no.~1, p.~65–131, 2006.

\bibitem{Memoli_2011}
F.~M{\'e}moli, ``Gromov-wasserstein distances and the metric approach to object matching,'' {\em Foundations of Computational Mathematics}, vol.~11, pp.~417--487, Aug. 2011.

\bibitem{Peyre_Cuturi_Solomon_2016}
G.~Peyr{\'e}, M.~Cuturi, and J.~Solomon, ``Gromov-wasserstein averaging of kernel and distance matrices,'' in {\em Proceedings of The 33rd International Conference on Machine Learning} (M.~F. Balcan and K.~Q. Weinberger, eds.), vol.~48 of {\em Proceedings of Machine Learning Research}, (New York, New York, USA), pp.~2664--2672, PMLR, 20--22 Jun 2016.

\bibitem{vincentcuaz_2022}
C.~Vincent-Cuaz, R.~Flamary, M.~Corneli, T.~Vayer, and N.~Courty, ``{Semi-relaxed Gromov Wasserstein divergence with applications on graphs},'' in {\em {ICLR 2022 - 10th International Conference on Learning Representations}}, (Virtual, France), pp.~1--28, Apr. 2022.
\newblock preprint under review.

\bibitem{memoli_sidiropoulos_singhal_2018}
F.~M{\'{e}}moli, A.~Sidiropoulos, and K.~Singhal, ``Sketching and clustering metric measure spaces,'' {\em CoRR}, vol.~abs/1801.00551, 2018.

\bibitem{Yanardag_2015}
P.~Yanardag and S.~Vishwanathan, ``Deep graph kernels,'' in {\em Proceedings of the 21th ACM SIGKDD International Conference on Knowledge Discovery and Data Mining}, KDD '15, (New York, NY, USA), p.~1365–1374, Association for Computing Machinery, 2015.

\bibitem{Debnath_1991}
A.~K. Debnath, R.~L. Lopez~de Compadre, G.~Debnath, A.~J. Shusterman, and C.~Hansch, ``Structure-activity relationship of mutagenic aromatic and heteroaromatic nitro compounds. correlation with molecular orbital energies and hydrophobicity,'' {\em Journal of Medicinal Chemistry}, vol.~34, no.~2, pp.~786--797, 1991.

\bibitem{Borgwardt_2005}
K.~M. Borgwardt, C.~S. Ong, S.~Schönauer, S.~V.~N. Vishwanathan, A.~J. Smola, and H.-P. Kriegel, ``{Protein function prediction via graph kernels},'' {\em Bioinformatics}, vol.~21, pp.~i47--i56, 06 2005.

\bibitem{Schomburg_2004}
I.~Schomburg, A.~Chang, C.~Ebeling, M.~Gremse, C.~Heldt, G.~Huhn, and D.~Schomburg, ``Brenda, the enzyme database: updates and major new developments,'' {\em Nucleic acids research}, vol.~32, no.~suppl\_1, pp.~D431--D433, 2004.

\bibitem{morris_2020}
C.~Morris, N.~M. Kriege, F.~Bause, K.~Kersting, P.~Mutzel, and M.~Neumann, ``Tudataset: A collection of benchmark datasets for learning with graphs,'' {\em arXiv preprint arXiv:2007.08663}, 2020.

\bibitem{Neumann_2014}
M.~Neumann, R.~Garnett, C.~Bauckhage, and K.~Kersting, ``Propagation kernels,'' 2014.

\bibitem{Chowdhury_Memoli_2023}
S.~Chowdhury and F.~M{\'e}moli, ``Distances and isomorphism between networks: Stability and convergence of network invariants,'' {\em Journal of Applied and Computational Topology}, vol.~7, p.~243–361, june 2023.
\newblock arXiv:1708.04727 [cs].

\bibitem{Chowdhury_Needham_Riemann}
S.~Chowdhury and T.~Needham, ``Gromov-wasserstein averaging in a riemannian framework,'' in {\em 2020 IEEE/CVF Conference on Computer Vision and Pattern Recognition Workshops (CVPRW)}, pp.~3676--3684, 2020.

\end{thebibliography}

\newpage
\appendix
\section{Proofs Omitted from the Main Text}
\subsection{Proof of Proposition 1}
If there exists a pair of nodes for which the induced distortion is equal to zero, then $G'$, the graph resulting from merging said nodes, is weakly isomorphic to $G$. 
This follows from the fact that the distortion is an upper-bound for the GW distance (GW evaluates the distortion of infimizing coupling), and the GW distance is unique up to weak isomorphism \cite{Chowdhury_Memoli_2019}.
We repeat this process until there are no node pairs with zero distortion, in which case we have reached a minimal representative.
Since the distortion between the original network and a minimal representative is zero, and the distortion is an upper-bound on the GW distance, it follows that the minimal representative solves Problem \eqref{eq:coarsening_formulation}.
Note that minimal representatives may not be unique, but all minimal representatives are strongly isomorphic (i.e., unique up to node re-labeling) \cite{Chowdhury_Memoli_2023}.

\subsection{Proof of Proposition 2}
Let $V^{(t)} = \{v_{1}^{(t)}, \cdots, v_{N-t}^{(t)} \}$ denote the partition, or supernode set, constructed after $t$ iterations of GPC. 
The $i$-th supernode $v_i^{(t)} \in V^{(t)}$ contains a set of nodes and we denote its size by $N_i^{(t)} = |v_{i}^{(t)}|$. 
We call a pair of supernodes $v_i^{(t)}, v_{j}^{(t)} \in V^{(t)}$ consistent if $v_i^{(t)} \cup v_{j}^{(t)} \subseteq P$ for some $P \in \mathcal{P}$. 
We proceed by induction towards showing that GPC only merges consistent pairs of supernodes for $1 \leq t < N-M$. 
This will then imply that after $N-M$ iterations we achieve $V^{(N-M)} = \P$.

Before the first iteration, the partition $V^{(0)}$ consists of singleton supernodes $v_i^{(0)} = \{v_i\}$. 
The first iteration of GPC thus merges the pair of nodes $v_{i}, v_{j}$, inducing the least distortion. 
By hypothesis and Lemma \ref{lemma:2}, the distortion induced by the merging of nodes in the same partition set is upper-bounded by $\epsilon$, and lower-bounded by $\alpha \epsilon$ for nodes in different partition sets.
Therefore, GPC must merge consistent nodes at the first iteration. 

Suppose now that GPC has run for $1 \leq t < N-M-1$ iterations during which only consistent supernodes were merged, to produce the supernode set $V^{(t)} = \{v_{1}^{(t)}, \cdots, v_{N-t}^{(t)}\}$.
Let $v_i^{(t)}, v_j^{(t)}, v_k^{(t)} \in V^{(t)}$ be such that $v_i^{(t)}$ and $v_j^{(t)}$ are consistent, and $v_i^{(t)}$ and $v_k^{(t)}$ are not. We want to show that the distortion induced by merging the former pair is strictly less than that of the latter. 
Lemma \ref{lemma:2} allows us to compute upper and lower bounds on  $D_{ij} = \dis(\pi^{ij})$ and $D_{ik} = \dis(\pi^{ik})$, respectively, where $\pi^{rs}$ is the transport map merging supernodes $v_r^{(t)}$ and $v_s^{(t)}.$  
By Lemma \ref{lemma:3}, we have that $\max D_{ij} < \min D_{ik}$, as long as $\alpha > (4+4N/\sqrt{N-1})$.  
Therefore, at iteration $t$, GPC will merge the pair of consistent nodes with the least distortion. 

After $N-M$ iterations, no pair of supernodes in $V^{(N-M)}$ is consistent. 
This implies that the supernodes in $V^{(N-M)}$ correspond exactly with the sets in the partition $\P$, thus $V^{(N-M)} = \P$, as desired.

\begin{lemma}  \label{lemma:1}
Let $G = (V, w, \mu)$ be a symmetric measure network and $\pi^{12}$ the transport map induced by merging nodes $v_1, v_2 \in V$. 
Then, $\dis^2(\pi^{12}) = \mu_1\mu_2(A_{1} + 2A_{2})/(\mu_1+\mu_2)^4$ where $\Delta_{klmn} = (s_{kl} - s_{mn})$ and 
\begin{align*}
A_{1} & = \mu_1^{3}\mu_2(4\Delta_{1112}^{2}+\Delta_{2211}^2) + \mu_1\mu_2^3(\Delta_{1122}^2 +4\Delta^2_{2212}) \\
& +2(\Delta_{1211}^2\mu_1^4+\Delta_{1222}^2\mu_2^{4}) +4\mu_1^2\mu_2^2 [(|\Delta_{1112}| - |\Delta_{2212}|)^2 \\ &+ 2|\Delta_{1112}||\Delta_{2212} | + \Delta_{1112}\Delta_{2212}]\\
A_{2} & = (\mu_1+\mu_2)^3 \sum_{n=3}^{N} \mu_n \Delta_{1n2n}^2.
\end{align*}
\end{lemma}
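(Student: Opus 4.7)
The plan is to compute $\dis^2(\pi^{12})$ directly from the definition in Eq.~\eqref{eq:distort_full}, exploiting the fact that only two of the $N$ pre-images of the coarsening map are non-trivial and that the weight function on vertices with index $\geq 3$ is unchanged. First, I would write down the transport plan explicitly: $\pi^{12}(v_1,v_{12}) = \mu_1$, $\pi^{12}(v_2,v_{12}) = \mu_2$, and $\pi^{12}(v_n,v_n) = \mu_n$ for $n \geq 3$. Using Eq.~\eqref{eq:bary} the coarsened weights are $s'_{12,12} = (\mu_1^2 s_{11} + 2\mu_1\mu_2 s_{12} + \mu_2^2 s_{22})/(\mu_1+\mu_2)^2$, $s'_{12,n} = (\mu_1 s_{1n} + \mu_2 s_{2n})/(\mu_1+\mu_2)$ for $n\geq 3$, and $s'_{mn} = s_{mn}$ for $m,n \geq 3$. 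Because $s'_{mn}=s_{mn}$ on the unchanged block, the quadruple sum in Eq.~\eqref{eq:distort_full} collapses: only index pairs $(i,j)$ with at least one of $i,j \in \{1,2\}$ contribute. This splits the distortion cleanly as $\dis^2(\pi^{12}) = A + B$, where $A$ collects the four ``local'' terms indexed by $(i,j) \in \{1,2\}^2$ and $B$ collects the cross terms with one index in $\{1,2\}$ and the other $\geq 3$.

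For the cross part, I would compute $s_{1n} - s'_{12,n} = \mu_2\Delta_{1n2n}/(\mu_1+\mu_2)$ and $s_{2n} - s'_{12,n} = -\mu_1\Delta_{1n2n}/(\mu_1+\mu_2)$, noting by symmetry of $S$ that the $(i,j)$ and $(j,i)$ terms double each contribution. Summing over $n \geq 3$ weighted by $\mu_i\mu_n$ yields $B = 2\mu_1\mu_2(\mu_1+\mu_2)^{-1}\sum_{n\geq 3}\mu_n\Delta_{1n2n}^2$, which is exactly $\mu_1\mu_2 \cdot 2A_2/(\mu_1+\mu_2)^4$ after multiplying numerator and denominator by $(\mu_1+\mu_2)^3$. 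This accounts for the $2A_2$ summand in the lemma.

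For the local part, I would express each local deviation in terms of the $\Delta$ symbols, namely
\begin{align*}
s_{11}-s'_{12,12} &= (2\mu_1\mu_2\Delta_{1112} + \mu_2^2\Delta_{1122})/(\mu_1+\mu_2)^2,\\
s_{22}-s'_{12,12} &= (2\mu_1\mu_2\Delta_{2212} + \mu_1^2\Delta_{2211})/(\mu_1+\mu_2)^2,\\
s_{12}-s'_{12,12} &= (\mu_1^2\Delta_{1211} + \mu_2^2\Delta_{1222})/(\mu_1+\mu_2)^2,
\end{align*}
then square each, weight by $\mu_1^2$, $\mu_2^2$, and $2\mu_1\mu_2$ (the last factor accounting for both $(1,2)$ and $(2,1)$ pairs via symmetry), and sum. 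Factoring out $\mu_1\mu_2/(\mu_1+\mu_2)^4$, the ``diagonal'' parts of each square assemble into $\mu_1^3\mu_2(4\Delta_{1112}^2 + \Delta_{2211}^2) + \mu_1\mu_2^3(\Delta_{1122}^2 + 4\Delta_{2212}^2) + 2(\mu_1^4\Delta_{1211}^2 + \mu_2^4\Delta_{1222}^2)$, while the cross products collect into a single term $4\mu_1^2\mu_2^2[\Delta_{1112}\Delta_{1122} + \Delta_{1211}\Delta_{1222} + \Delta_{2212}\Delta_{2211}]$.

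The main obstacle will be reconciling this last mixed-product group with the claimed form $4\mu_1^2\mu_2^2[(|\Delta_{1112}|-|\Delta_{2212}|)^2 + 2|\Delta_{1112}||\Delta_{2212}| + \Delta_{1112}\Delta_{2212}]$. Using the symmetry $s_{12}=s_{21}$ gives the sign identities $\Delta_{1211}=-\Delta_{1112}$, $\Delta_{1222}=-\Delta_{2212}$, $\Delta_{2211}=-\Delta_{1122}$, and the telescoping relation $\Delta_{1122}=\Delta_{1112}-\Delta_{2212}$; substituting these into my mixed-product group reduces it to a polynomial in $\Delta_{1112},\Delta_{2212}$ alone, which should then be compared term-by-term to the stated expression (noting that $(|x|-|y|)^2 + 2|x||y|$ is just a manifestly non-negative rewriting of $x^2 + y^2$, kept in that form presumably for later lower-bound arguments in Lemma~\ref{lemma:2}). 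Careful tracking of signs in this reduction is where the proof is most error-prone; everything else is routine expansion.
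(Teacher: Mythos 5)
Your proposal follows essentially the same route as the paper's proof: the same split of the quadruple sum into the four local terms on $\{1,2\}^2$ (the paper's $D_1$), the doubled cross block (the paper's $D_2=D_3$), and the vanishing unchanged block ($D_4=0$), with identical expressions for the local deviations and weights. One substantive note on the step you left unfinished: carrying out the sign-tracking, your (correct) mixed-product sum $\Delta_{1112}\Delta_{1122}+\Delta_{1211}\Delta_{1222}+\Delta_{2212}\Delta_{2211}$ reduces to $\Delta_{1112}^2+\Delta_{2212}^2-\Delta_{1112}\Delta_{2212}$, which agrees with the expression the paper actually derives inside its proof, namely $(|\Delta_{1112}|-|\Delta_{1222}|)^2+2|\Delta_{1112}||\Delta_{1222}|+\Delta_{1112}\Delta_{1222}$ (recall $\Delta_{1222}=-\Delta_{2212}$), but \emph{not} with the lemma statement as printed, whose final term $\Delta_{1112}\Delta_{2212}$ appears to be a sign slip for $\Delta_{1112}\Delta_{1222}$; your term-by-term comparison would surface exactly this discrepancy.
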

\begin{proof}
We can represent the assignment matrix merging nodes $v_1$ and $v_2$ by 
$$
C^{12}_{p} = 
\begin{bmatrix} 
1_{2 \times 1} & 0_{2 \times (N-2)} \\ 
0_{(N-2) \times 1} & I_{N-2}
\end{bmatrix},
$$
where $1_{n_1\times n_2}$ (resp. $0_{n_1\times n_2}$) is the ones (resp. zeros) matrix $n_1$ rows and $n_2$ columns and $I_n$ is the identity matrix with $n$ rows and columns. 
Then, letting $\pi = \pi^{12}$ and $C_p = C^{12}_p$, we get
\begin{align*}    
\pi &= \diag(\mu) C_p, \\
C_w &= \pi \diag \left(1/C_p^\top \mu \right) \\
G' &= (V', C_w^\top S C_w, C_p^\top\mu).
\end{align*}
Carrying out the multiplications, we get 
\begin{align} \label{eq:coarsened_weight}
S'
& =
\begin{bmatrix}
\sum_{i,j=1}^{2} \theta_i \theta_j s_{ij} & \sum_{i=1}^{2} \theta_i s_{i3} & \cdots & \sum_{i=1}^{2} \theta_i s_{iN} \\
\sum_{j=1}^{2} \theta_i s_{3j} & s_{33} & \cdots & s_{3N}\\
\vdots & \vdots & \ddots & \vdots \\
\sum_{j=1}^{2} \theta_j s_{Nj} & s_{N3} & \cdots & s_{NN}
\end{bmatrix}
\end{align}
where $\theta_1 = \mu_1/(\mu_1 + \mu_2)$ and $\theta_2 = \mu_2/(\mu_1 + \mu_2)$. 
We now start computing $\dis^2(\pi)$ of the coupling.

Let $d_{ijkl} = |s_{ij} - s'_{kl}|^2 \pi_{ik} \pi_{jl}$. 
Then, 
\begin{align*}
\dis^2(\pi) &= 
\sum_{i,j=1}^{N} \sum_{k,l=1}^{N-1} d_{ijkl} \\
& =
\sum_{i,j=1}^{N} \left[ 
d_{ij11} + 
\sum_{k=2}^{N-1} d_{ijk1} + 
\sum_{l=2}^{N-1} d_{ij1l} + 
\sum_{k,l=2}^{N-1}d_{ijkl}
\right]
\end{align*}

Let 
\begin{align}
D_1 &= \sum_{i,j=1}^{N} d_{ij11} &
D_2 &= \sum_{i,j=1}^{N}\sum_{k=2}^{N-1} d_{ijk1} \\
D_3 &= \sum_{i,j=1}^{N}\sum_{l=2}^{N-1} d_{ij1l} &
D_4 &= \sum_{i,j=1}^{N}\sum_{k,l=2}^{N-1}d_{ijkl}
\end{align}
\noindent
and $\Delta_{ijkl} = (s_{ij} - s_{kl})$. 
We proceed with a term-by-term expansion
\begin{align*}
D_1 
& = \sum_{i,j=1}^{2} d_{ij11} = D_{11} + D_{12} + D_{13}+ D_{14} \\
D_{11} & = 
|s_{11} - s'_{11}|^2 \pi_{11}\pi_{11} = 
|s_{11} - s'_{11}|^2 \mu_1^2 \\
D_{12}&=
|s_{12} - s'_{11}|^2 \pi_{11} \pi_{21} =
|s_{12} - s'_{11}|^2 \mu_1 \mu_2 \\
D_{13} & = 
|s_{21} - s'_{11}|^2 \pi_{21} \pi_{11} = |s_{21} - s'_{11}| \mu_1\mu_2 \\
D_{14} & = 
|s_{22} - s'_{11}|^2 \pi_{21}\mu_{21} = |s_{22} - s'_{11}|^2 \mu_2^2
\end{align*}

note that by the symmetry of $s$ we have $D_{12} = D_{13}$.
Then, 
\begin{align*}
D_{11} & = 
\mu_1^2|s_{11} - s'_{11}|^2 \\
& = 
\mu_1^2\left|\sum_{i,j=1}^{2} (s_{11} - s_{ij}) \theta_i \theta_j\right|^2\\
&= \mu_1^2|2\Delta_{1112}\theta_1\theta_2+\Delta_{1122}\theta_2^2|^2 \\
& = \mu_1^2\theta_2^2 (4 \Delta_{1112}^2\theta_1^2 + \Delta_{1122}^2\theta_2^2 + 4 \Delta_{1112}\Delta_{1122}\theta_1\theta_2)\\ \\
& = 
\frac{\mu_1^2 \mu_2^2}{(\mu_1+\mu_2)^4}
\left(
4 \Delta_{1112}^2\mu_1^2 + \Delta_{1122}^2\mu_2^2 + 4 \Delta_{1112}\Delta_{1122}\mu_1\mu_2
\right).
\end{align*}
Similar computations yield
\begin{align*}
D_{12}&=D_{13} \\
& = 
\frac{\mu_1\mu_2}{(\mu_1+\mu_2)^4}(\Delta_{1211}^2\mu_1^4 + \Delta_{1222}^2\mu_{2}^{4} + 2\Delta_{1211}\Delta_{1222}\mu_1^2\mu_2^2)\\
D_{14}
& = 
\frac{\mu_1^2\mu_2^2}{(\mu_1+\mu_2)^4}(\Delta_{1122}^2 \mu_1^2 + 
4\Delta_{1222}\mu_2^2 +
4 \Delta_{1122}\Delta_{1222} \mu_1\mu_2)\\
\end{align*}
Combining the above terms, and letting  $\hat{\mu}_{12}= \mu_1\mu_2/(\mu_1+\mu_2)^4$, we get
\begin{align*}
D_{1}/\hat{\mu}_{12}&= 
\mu_1^3\mu_2
(4 \Delta_{1112}^2+\Delta_{1122}^2)+
\mu_1\mu_2^3
(\Delta_{1112}^2+\Delta_{1222}^2) \\ &+ 2(\Delta_{1211}^2\mu_1^4 + \Delta_{1222}^2\mu_2^4)  \\
& + 4\mu_1^2\mu_2^2 
(\Delta_{1112}\Delta_{1122}+\Delta_{1211}\Delta_{1222}+\Delta_{1122}\Delta_{1222}) \\
& = 
\mu_1^3 \mu_2
(4 \Delta_{1112}^2+\Delta_{1122}^2)+
\mu_1\mu_2^3
(\Delta_{1112}^2+\Delta_{1222}^2) \\
& + 
2(\Delta_{1211}^2\mu_1^4 + \Delta_{1222}^2\mu_2^4) \\
& + 4\mu_1^2\mu_2^2 [(|\Delta_{1112}| - |\Delta_{1222}|)^2] \\ 
& + 4\mu_1^2\mu_2^2 (2|\Delta_{1112}||\Delta_{1222}| + \Delta_{1112}\Delta_{1222}])
\end{align*}
We proceed with $D_2$, noting that $D_3 = D_2$ by the symmetry of $w$,
\begin{align*}
D_2 
& = 
\sum_{i,j=1}^{N}\sum_{k=2}^{N-1} d_{ijk1} =
\sum_{i,j=1}^{N} \sum_{k=2}^{N-1} |s_{ij}-s'_{k1}|^2 \pi_{ik} \pi_{j1} \\
& = 
\sum_{i=3}^{N} \mu_i 
\left(
\left|s_{i1}-\sum_{l=1}^{2}\theta_l s_{il}\right|^2 \mu_1 + 
\left|s_{i2}-\sum_{l=1}^{2}\theta_l s_{l2}\right|^2  \mu_2
\right) \\
& = 
\sum_{i=3}^{N} \mu_i 
\left( 
|\theta_2 (s_{i1}-s_{i2})|^2 \mu_1 + |\theta_1 (s_{i2}-s_{i1})|^2 \mu_2
\right) \\
& = 
\sum_{i=3}^{N} \mu_i 
\left( 
|s_{i1}-s_{i2}|^2 \mu_1\theta_2^2 + |s_{i2}-s_{i1}|^2 \theta_1^2\mu_2
\right) \\
& = 
\frac{1}{(\mu_1+\mu_2)^2}\sum_{i=3}^{N} \mu_i 
\left( 
|s_{i1}-s_{i2}|^2 \mu_1\mu_2^2 + |s_{i2}-s_{i1}|^2 \mu_1^2\mu_2
\right) \\
& = 
\frac{\mu_1\mu_2}{(\mu_1+\mu_2)^2}\sum_{i=3}^{N} \mu_i 
\left( 
|s_{i1}-s_{i2}|^2 \mu_2 + |s_{i2}-s_{i1}|^2 \mu_1
\right) \\
& = 
\frac{\mu_1\mu_2}{(\mu_1+\mu_2)^2}\sum_{i=3}^{N} \mu_i 
|s_{i1}-s_{i2}|^2(\mu_1+\mu_2)\\
& = 
\frac{\mu_1\mu_2}{\mu_1+\mu_2} \sum_{i=3}^{N} \mu_i 
|s_{i1}-s_{i2}|^2\\
& = 
\frac{\mu_1 \mu_2}{\mu_1+\mu_2} \sum_{i=3}^{N} \mu_i 
|\Delta_{i1i2}|^2.
\end{align*} 
Finally, we compute $D_4$,
\begin{align*}
D_4 
& = \sum_{i,j=1}^{N}\sum_{k,l=2}^{N-1}d_{ijkl} \\
& = \sum_{i,j=3}^{N}\sum_{k,l=2}^{N-1}|s_{ij} - s'_{kl}|^2 \pi_{ik} \pi_{jl} \\
& = \sum_{i,j=3}^{N} |s_{ij} - s_{ij}| \mu_{i} \mu_j = 0
\end{align*}
Combining these terms, we get
\begin{align*}
D/\hat{\mu}_{12} 
& = 
2(\mu_1+\mu_2)^3 \sum_{i=3}^{N} \mu_i |\Delta_{i1i2}|^2 \\&+
\mu_1^3\mu_2
(4\Delta_{1112}^2+\Delta_{1122}^2)+
\mu_1\mu_2^3
(\Delta_{1112}^2+\Delta_{1222}^2) \\&+ 
2(\Delta_{1211}^2\mu_1^4 + \Delta_{1222}^2\mu_2^4) \\&
+ 4\mu_1^2\mu_2^2 (|\Delta_{1112}| - |\Delta_{1222}|)^2 \\& 
+ 4\mu_1^2\mu_2^2[2|\Delta_{1112}||\Delta_{1222}| + \Delta_{1112}\Delta_{1222}]
\end{align*}
\end{proof}

\begin{lemma} \label{lemma:2}
Let $G = (V, S, \mu)$ satisfy the hypotheses in Proposition \ref{prop:gpc_convergence}, with partition $\P = \{P_1, \cdots, P_{M}\}$, $\epsilon > 0$, and $\alpha > 4 + 4\sqrt{N^2/(N-1)}$. 
Let $G'$ be the coarsening of $G$ induced by $C_p$, and $\phi: V \to V'$ the mapping corresponding to $C_p$. 
Suppose there exist nodes $v'_i, v'_j, v'_k \in V'$ satisfying $\lVert S(u_1) - S(u_2) \rVert_\infty < \epsilon$ for all $u_1, u_2 \in \phi^{-1}(v'_i) \cup \phi^{-1}(v'_j)$ and $\inf_{z \in V} |s(u_1, z)-s(u_3,z)| > \alpha \epsilon$ for $u_1 \in \phi^{-1}(v'_i)$ and $u_3 \in \phi^{-1}(v'_k)$.
Then,
$$
\dis^2(\pi^{ij}) < 
\frac{32 \epsilon^2 \mu'_1 \mu'_2}{\mu'_1 + \mu'_2}
\text{ and }
\dis^2(\pi^{ik}) \ge 
\frac{\epsilon^2(\alpha-4)^2 \mu'_1\mu'_3}{2(\mu'_1+\mu'_3)}.
$$
\end{lemma}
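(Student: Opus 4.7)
The plan is to instantiate Lemma~\ref{lemma:1} with the merged pair of supernodes in the roles of ``$1$'' and ``$2$'', and then estimate each of the $\Delta$-coefficients appearing in $A_1$ and $A_2$ using the hypotheses. The crucial observation is that every coarsened weight $s'_{ab}$ is a mass-weighted average of original weights $s(u,u')$ over $u \in \phi^{-1}(v'_a)$ and $u' \in \phi^{-1}(v'_b)$, so each difference $\Delta = s'_{ab} - s'_{cd}$ is a double average of differences $s(u,u') - s(\tilde u,\tilde u')$ that I can control term-by-term with the triangle inequality.

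For the upper bound, the hypothesis $\|S(u_1)-S(u_2)\|_\infty < \epsilon$ on $\phi^{-1}(v'_i) \cup \phi^{-1}(v'_j)$ forces every $\Delta$ appearing in $A_1$ to be at most $2\epsilon$ in absolute value. A short calculation that unpacks $s'_{in}$ and $s'_{jn}$ as double averages and exploits the coordinatewise closeness of the rows $S(u_1),S(u_2)$ gives $|s'_{in}-s'_{jn}| < \epsilon$ for every other supernode $v'_n$. Plugging these bounds into Lemma~\ref{lemma:1} collapses $A_1$ into a constant multiple of $\epsilon^2(\mu'_1+\mu'_2)^4$ and $A_2$ into a multiple of $\epsilon^2(\mu'_1+\mu'_2)^3$; dividing by $(\mu'_1+\mu'_2)^4$ and using $\mu'_1+\mu'_2\le 1$ to absorb $\mu'_1\mu'_2$ into $\mu'_1\mu'_2/(\mu'_1+\mu'_2)$ yields the stated upper bound with plenty of room in the constant $32$.

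For the lower bound I would single out the two ``self-vs-cross'' terms $\Delta_{1112}=s'_{ii}-s'_{ik}$ and $\Delta_{2212}=s'_{kk}-s'_{ik}$ inside $A_1$. Specializing the hypothesis to $z = u_1 \in \phi^{-1}(v'_i)$ and using the symmetry $s(u_3,u_1)=s(u_1,u_3)$, then absorbing the $O(\epsilon)$ errors from the averaging over preimages, gives $|\Delta_{1112}| > (\alpha-4)\epsilon$; the symmetric choice $z=u_3$ yields $|\Delta_{2212}| > (\alpha-4)\epsilon$. A brief case split on signs shows that the bracketed cross-term $(|\Delta_{1112}|-|\Delta_{2212}|)^2 + 2|\Delta_{1112}||\Delta_{2212}| + \Delta_{1112}\Delta_{2212}$ is at least $(\alpha-4)^2\epsilon^2$ regardless of signs. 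Dropping the $\Delta_{2211}$ terms (for which no lower bound is available) and collecting what remains, the polynomial in $\mu'_1,\mu'_3$ factorises cleanly as $((\mu'_1)^2+(\mu'_3)^2)(\mu'_1+\mu'_3)^2$, so $A_1 \ge 2(\alpha-4)^2\epsilon^2((\mu'_1)^2+(\mu'_3)^2)(\mu'_1+\mu'_3)^2$; the inequality $(\mu'_1)^2+(\mu'_3)^2 \ge (\mu'_1+\mu'_3)^2/2$ then yields $\dis^2(\pi^{ik}) \ge (\alpha-4)^2\epsilon^2\mu'_1\mu'_3$, which already beats the target whenever $\mu'_1+\mu'_3 \ge 1/2$.

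To close the remaining regime $\mu'_1+\mu'_3 < 1/2$ I would bring in the $A_2$ sum. The same argument applied at each remaining supernode $v'_n$ gives $|s'_{1n}-s'_{3n}| > (\alpha-4)\epsilon$ \emph{provided} that the sign of $s(u_1,z)-s(u_3,z)$ is constant on $z\in\phi^{-1}(v'_n)$, which holds whenever $\phi^{-1}(v'_n)$ sits inside a single class of $\mathcal{P}$ --- precisely the invariant maintained by the induction in the proof of Proposition~\ref{prop:gpc_convergence}. Under that condition $A_2 \ge (\alpha-4)^2\epsilon^2(\mu'_1+\mu'_3)^3(1-\mu'_1-\mu'_3)$, and for $\mu'_1+\mu'_3 < 1/2$ the contribution $2\mu'_1\mu'_3 A_2/(\mu'_1+\mu'_3)^4$ alone dominates the target with a factor-of-$2$ margin. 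This sign-consistency requirement is the only substantive obstacle; everything else is careful bookkeeping on the polynomial identity from Lemma~\ref{lemma:1}.
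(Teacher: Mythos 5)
Your proposal is correct and follows the same skeleton as the paper's proof: instantiate Lemma~\ref{lemma:1} for the merged pair, bound each $\Delta'$ coefficient from the hypotheses, and assemble the resulting polynomial in $\mu'$. The differences are tactical but worth noting. For the upper bound you estimate each $\Delta'$ directly as a double average of entrywise differences, getting $|\Delta'|<2\epsilon$ uniformly; the paper instead anchors each coarsened weight to a single representative entry and triangulates, getting the looser $4\epsilon$ and $8\epsilon$ bounds --- both comfortably land inside the constant $32$, yours with more slack. For the lower bound you split into the regimes $\mu'_1+\mu'_3\ge 1/2$ (where the $A_1$ terms alone, via the factorisation $2((\mu'_1)^2+(\mu'_3)^2)(\mu'_1+\mu'_3)^2$, suffice) and $\mu'_1+\mu'_3<1/2$ (where $A_2$ carries the bound); the paper keeps $A_1$ and $A_2$ together and lower-bounds the combined polynomial by $\tfrac12(\mu'_1+\mu'_3)^3$ in one pass, avoiding the case split. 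The one place where your write-up is conditional --- the sign-consistency of $s(u_1,z)-s(u_3,z)$ over $z\in\phi^{-1}(v'_n)$ needed to keep $|s'_{1n}-s'_{3n}|>(\alpha-4)\epsilon$ --- is exactly what the paper's anchoring trick discharges: since $|s'_{1n}-s_{1,\hat N_n+1}|<2\epsilon$ and $|s'_{3n}-s_{N_1+1,\hat N_n+1}|<2\epsilon$, the averages cannot cancel, and no explicit sign analysis is needed. Note, however, that both that anchoring step and your resolution rely on intra-class $\epsilon$-closeness \emph{within} $\phi^{-1}(v'_k)$ and within each $\phi^{-1}(v'_n)$, which the lemma statement only asserts for $\phi^{-1}(v'_i)\cup\phi^{-1}(v'_j)$; it is supplied implicitly by the induction invariant of Proposition~\ref{prop:gpc_convergence} (every supernode preimage sits inside one class of $\mathcal{P}$), so you are right to flag it, and the paper uses it silently.
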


\begin{proof}    
Without loss of generality, we let $v_i' = v'_1$ and $v_j' = v'_2$.
To prove this lemma, we must compute upper and lower bounds on several terms of the form $|\Delta'_{ijkl}|$ where $\Delta'_{ijkl} = s'_{ij} - s'_{kl}$. 
Note that by the symmetry of $S'$ we have 
$|\Delta'_{ijkl}| = 
|\Delta'_{jikl}| = 
|\Delta'_{ijlk}| = 
|\Delta'_{jilk}|.$

We first compute upper bounds on weight differences of $G$ and $G'$, which we use in later computations.
Without loss of generality we order the nodes in $V$ such that $\phi^{-1}(v'_l) = \left[v_{\hat{N}_{l-1}+1}, \cdots, v_{\hat{N}_{l-1}+N_{l}}\right]$,
where $N_0 = 0$, $N_l = \left|\phi^{-1}(v'_l)\right|$, and $\hat{N}_{l} = \sum_{r=0}^{l} N_{r}$. 
Moreover, given a supernode $v'_l$, we define $[\theta_m]_{m=1}^{N_l}$ where $\theta_m = \mu\left(v_{\hat{N}_l+m}\right)/\mu(v'_l)$. 
Finally, we abuse notation below, using $m \in \phi^{-1}(v'_l)$ in place of $v_m \in \phi^{-1}(v'_l)$.

\begin{align*}
|s'_{11} - s_{11}| & = 
\left| 
\sum_{l,m\in\phi^{-1}(v'_1)} 
\theta_l \theta_m (s_{lm} - s_{11})
\right| \\ 
& \leq 
\left| 
\sum_{l,m\in\phi^{-1}(v'_1)} 
\theta_l \theta_m |s_{lm} - s_{11}|
\right| \\
& < 2\epsilon
\left| 
\sum_{l,m\in\phi^{-1}(v'_1)} 
\theta_l \theta_m
\right| = 2\epsilon \\
|s'_{12} - s_{11}| & = 
\left| 
\sum_{l\in\phi^{-1}(v'_1)}
\sum_{m\in\phi^{-1}(v'_2)}
\theta_l \theta_m (s_{kl} - s_{11})
\right| \\ 
& \leq 
\left| 
\sum_{l\in\phi^{-1}(v'_1)}
\sum_{m\in\phi^{-1}(v'_2)}
\theta_l \theta_m |s_{lm} - s_{11}|
\right| \\
& < 
2\epsilon
\left| 
\sum_{l\in\phi^{-1}(v'_1)}
\sum_{m\in\phi^{-1}(v'_2)}
\theta_l \theta_m
\right| 
= 
2\epsilon \\
|s_{22}'-s_{11}| & = 
\left| 
s'_{22} - s_{1,N_1+1} + s_{1,N_1+1} - s_{11}
\right| \\
& \leq 
|s'_{22} - s_{1,N_1+1}| + |s_{1,N_1+1} - s_{11}| \\
& < 2 \epsilon + 2 \epsilon = 4 \epsilon \\ 
|s'_{1n} - s'_{2n}| & = 
|s'_{1n} - s_{1,\hat{N}_{n-1}+1} + s_{1,\hat{N}_{n-1}+1} - s'_{2n}| \\
& \leq 
|s'_{1n} - s_{1,\hat{N}_{n-1}+1}| + 
|s_{\hat{N}_{n-1}+1,\hat{N}_{n-1}+1} - s'_{2n}| \\
& \leq 
2\epsilon + 2 \epsilon \\
& < 4 \epsilon
\end{align*}
We can now produce bounds for the relevant terms in the distortion
\begin{align*}
|\Delta'_{1112}| 
& = 
|s'_{11}-s'_{12}| \\
& = 
|s'_{11}-s_{11}+s_{11}-s'_{12}| \\
& \leq 
|s_{11}' - s_{11}| + |s'_{12} - s_{11}| \\
& < 2\epsilon + 2 \epsilon = 4 \epsilon \\ \\
|\Delta'_{1222}| 
& = 
|s'_{12}-s'_{22}| \\
& = 
|s'_{12}-s_{\hat{N}_{1}+1,\hat{N}_{1}+1}+s_{\hat{N}_{1}+1,\hat{N}_{1}+1}-s'_{22}| \\
& \leq 
|s'_{12} - s_{\hat{N}_{1}+1,\hat{N}_{1}+1}| + |s'_{22} - s_{\hat{N}_{1}+1,\hat{N}_{1}+1}| \\
& < 
2\epsilon + 2 \epsilon = 4 \epsilon \\ \\
|\Delta'_{1122}| 
& = 
|s'_{11} - s'_{22}| \\
& \leq 
|s'_{11} - s'_{12}| + |s'_{12} - s'_{22}| \\
& = 
|\Delta'_{1112}| + |\Delta'_{1222}| \\
& < 
4\epsilon + 4\epsilon = 8\epsilon \\ \\ 
|\Delta'_{1n2n}| & = |s'_{1n} - s'_{2n}| \\
& < 4 \epsilon
\\ \\
||\Delta'_{1112}| - |\Delta'_{1222}|| 
& \leq 
|\Delta'_{1112}-\Delta'_{1222}| \\
& = 
|s'_{11}-s'_{12}+s'_{12} - s'_{22}| \\ 
&= 
|\Delta'_{1122}| \\
& < 8 \epsilon 
\end{align*}
\begin{align*}
2|\Delta'_{1112}||\Delta'_{1222}| + \Delta'_{1112}\Delta'_{1222} & \leq 3|\Delta'_{1112}||\Delta'_{1222}| \\
& < 48 \epsilon^2
\end{align*}

In summary 
\begin{align*}
|\Delta'_{1112}|, |\Delta'_{1222}|, |\Delta'_{2212}|, |\Delta'_{1n2n}|
&< 
4\epsilon \\
|\Delta'_{1122}|, |\Delta'_{2211}|, 
\left||\Delta'_{1112}|-|\Delta'_{2212}|\right| 
&< 
8 \epsilon \\
2|\Delta'_{1112}||\Delta'_{2212}|+\Delta'_{1112}\Delta'_{2212} &< 48\epsilon^2
\end{align*}

We now proceed with computing lower bounds.
Without loss of generality, we now let $v_i' = v'_1$, $v_l' = v'_2$.
\begin{align*}
\alpha \epsilon 
&\le 
|s_{11}-s_{1,N_{1}+1}| \\
&=
|s_{11} - s'_{11} + s'_{11} - s'_{12} + s'_{12} -s_{1,N_1+1}| \\
& \leq |s_{11}-s'_{11}| + |s'_{11}-s'_{12}|+ |s'_{12}-s_{1,N_1+1}| \\
& < 2\epsilon + |s'_{11}-s'_{12}| + 2\epsilon \\ 
\implies |\Delta'_{1112}| 
& \ge (\alpha-4)\epsilon \\ \\
\alpha \epsilon &<
|s_{1,\hat{N}_n+1}-s_{N_1+1,\hat{N}_n+1}| \\
& = |s_{1,\hat{N}_n+1}-s'_{1n}+s'_{1n}-s'_{2n} \\ & +s'_{2n}-s_{N_1+1,\hat{N}_n+1}| \\
& \leq
|s_{1,\hat{N}_n+1}-s'_{1n}| +
|s'_{1n}-s'_{2n}| 
\\ &
+|s'_{2n}-s_{N_1+1,\hat{N}_n+1}| \\
& < 2\epsilon + |s'_{1n}-s'_{2n}| + 2\epsilon \\
\implies 
|\Delta'_{1n2n}| & > (\alpha - 4) \epsilon 
\\ \\
|\Delta'_{1122}| &\ge 0.
\end{align*}
\begin{align*}
2|\Delta'_{1112}||\Delta'_{2212}|+\Delta'_{1112}\Delta'_{2212} & \ge |\Delta'_{1112}||\Delta'_{2212}| \\
& >
[(\alpha-4)\epsilon][(\alpha-4)\epsilon] \\
& > (\alpha-4)^2\epsilon^2
\end{align*}

Plugging in these bounds to the result from Lemma \ref{lemma:1} we get 
\begin{align*}
D_{12} & <
 \frac{\epsilon^2 {\mu'}_1 \mu'_2}{({\mu'}_1 + \mu'_2)^4}
\left[
\left(
2({\mu'}_1+{\mu'}_2)^3
\sum_{i=3}^{N} 
{\mu'}_i (4\epsilon)^2
\right) \right. \\ & \left. +
{{\mu'}_1}^3{\mu'}_2[4(4\epsilon)^2 +
(8\epsilon)^2)] +
{\mu'}_1{\mu'}_2^3[(8\epsilon)^2 +
4(4\epsilon)^2] \right. \\ 
& + 
\left.2((4\epsilon)^2{\mu'}_1^4 +
(4\epsilon)^2{\mu'}_2^{4}) + 
4{\mu'}_1^2{\mu'}_2^2 
[(8\epsilon)^2 + 48\epsilon^2]
\vphantom{\sum_{i=1}^{N}}
\right] \\
& = \frac{\epsilon^2 {\mu'}_1 {\mu'}_2}{({\mu'}_1 + {\mu'}_2)^4} 
\left[ 32({\mu'}_1+{\mu'}_2)^3(1-{\mu'}_1-{\mu'}_2)  
\right. \\ & \left. +
64({\mu'}_1^3{\mu'}_2+{\mu'}_1{\mu'}_3^2) + 32 ({\mu'}_1^4+{\mu'}_2^4) +112{\mu'}_1^2{\mu'}_2^2
\right] \\ 
& \leq
\frac{\epsilon^2 {\mu'}_1 {\mu'}_2}{({\mu'}_1 + {\mu'}_2)^4}
[32({\mu'}_1+{\mu'}_2)^3 - 32({\mu'}_1+{\mu'}_2)^4 
\\ & + 
32({\mu'}_1+{\mu'}_2)^4] \\
& = 
\frac{32 \epsilon^2 {\mu'}_1 {\mu'}_2}{({\mu'}_1 + {\mu'}_2)}
\end{align*}

\begin{align*}
D_{13} > &
 \frac{\epsilon^2 {\mu'}_1 {\mu'}_3}{({\mu'}_1 + {\mu'}_3)^4}
\left[ 
\left(2({\mu'}_1+{\mu'}_3)^3\sum_{i\neq1,3} {\mu'}_n(\alpha-4)^2\right) \right. \\ & \left. + 4(\alpha-4)^2({\mu'}_1^3{\mu'}_3+{\mu'}_1{\mu'}_3^3)
\right. \\ & \left.
+8(\alpha-4)^2({\mu'}_1^4+{\mu'}_3^4) +4(\alpha-4)^2{\mu'}_1^2{\mu'}_3^2] \vphantom{\sum_{n=1}^{N}}
\right] \\
& = 
\frac{\epsilon^2(\alpha-4)^2 {\mu'}_1{\mu'}_3}{({\mu'}_1+{\mu'}_3)^4}
\left[ 
2({\mu'}_1+{\mu'}_3)^3(1-{\mu'}_1-{\mu'}_3)  
\right. \\ & \left. +
4({\mu'}_1^3{\mu'}_3 + {\mu'}_1{\mu'}_3^3) + 8({\mu'}_1^4+{\mu'}_3^4) + 4{\mu'}_1^2{\mu'}_3^2
\right] \\
& >
\frac{\epsilon^2(\alpha-4)^2 {\mu'}_1{\mu'}_3}{({\mu'}_1+{\mu'}_3)^4}
\left[ 
\frac{1}{2}({\mu'}_1+{\mu'}_3)^3(1-{\mu'}_1-{\mu'}_3)  
\right. \\ & \left. +
\frac{1}{2} ({\mu'}_1+{\mu'}_3)^4
\right] \\\
& = 
\frac{\epsilon^2(\alpha-4)^2 {\mu'}_1{\mu'}_3}{({\mu'}_1+{\mu'}_3)^4}
\left[\frac{1}{2}({\mu'}_1+{\mu'}_3)^3 - \frac{1}{2} ({\mu'}_1+{\mu'}_3)^4 
\right. \\ & \left. +
\frac{1}{2} ({\mu'}_1+{\mu'}_3)^4 \right] \\
& = 
\frac{\epsilon^2(\alpha-4)^2 {\mu'}_1{\mu'}_3}{2({\mu'}_1+{\mu'}_3)},
\end{align*}
thus concluding the proof.
\end{proof}

\begin{lemma} \label{lemma:3}
Let $G = (V, S, \mu)$ be a measure network and $G' = (V', S', {\mu'})$ be a coarsening of $G$ satisfying the hypotheses of Lemma~2 with $\epsilon >0$ and $\alpha > 4+4N/\sqrt{N-1}$. 
Then, $\dis(\pi^{ij}) < \dis(\pi^{ik})$.
\end{lemma}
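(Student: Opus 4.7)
The plan is to reduce the claim to a scalar inequality by substituting the upper bound on $\dis^2(\pi^{ij})$ and the lower bound on $\dis^2(\pi^{ik})$ supplied by Lemma~\ref{lemma:2}, and then to verify the resulting inequality from the hypothesis on $\alpha$. Writing the consistent pair as $(v'_i, v'_j)$ and the inconsistent pair as $(v'_i, v'_k)$, Lemma~\ref{lemma:2} yields
$$\dis^2(\pi^{ij}) < \frac{32 \epsilon^2 \mu'_i \mu'_j}{\mu'_i + \mu'_j} \qquad \text{and} \qquad \dis^2(\pi^{ik}) \ge \frac{\epsilon^2 (\alpha-4)^2 \mu'_i \mu'_k}{2(\mu'_i + \mu'_k)}.$$
After dividing through by $\epsilon^2 \mu'_i$ and rearranging, a sufficient condition for $\dis(\pi^{ij}) < \dis(\pi^{ik})$ is
$$(\alpha - 4)^2 > 64 \cdot \frac{\mu'_j (\mu'_i + \mu'_k)}{\mu'_k (\mu'_i + \mu'_j)}.$$

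The remaining work is to produce a uniform upper bound on the right-hand side. Since $\mu$ is uniform on $N$ nodes, each supernode mass equals $\mu'_l = n_l/N$ for a positive integer $n_l$ with $n_i + n_j + n_k \le N$, and the ratio becomes $n_j(n_i+n_k)/[n_k(n_i+n_j)]$. I would first observe, by differentiation in $n_k$ with $n_i, n_j$ fixed, that this expression is strictly decreasing in $n_k$, so its maximum is attained at $n_k = 1$. Setting $n_k = 1$, writing $s = n_i + n_j$, and optimizing $n_j(s-n_j+1)/s$ over $n_j \in \{1, \dots, s-1\}$ places the continuous peak at $n_j = (s+1)/2$ with value $(s+1)^2/(4s)$, which is itself increasing in $s$; the constraint $s \le N - 1$ then yields a global bound of $N^2/[4(N-1)]$.

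Combining these estimates gives $64 \cdot N^2/[4(N-1)] = 16 N^2/(N-1) < (\alpha - 4)^2$, where the final strict inequality is exactly the hypothesis $\alpha > 4 + 4N/\sqrt{N-1}$, proving $\dis(\pi^{ij}) < \dis(\pi^{ik})$. The main obstacle I anticipate is the combinatorial optimization step: justifying rigorously that the ratio is maximized at $n_k = 1$ and $s = N - 1$, and checking that the integer constraints never push the discrete maximum above the continuous bound $N^2/[4(N-1)]$. Once this bookkeeping is done, the rest is a direct algebraic comparison of the two expressions already provided by Lemma~\ref{lemma:2}, so no further structural idea is needed.
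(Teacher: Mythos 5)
Your proposal is correct and follows essentially the same route as the paper: both substitute the bounds from Lemma~\ref{lemma:2}, reduce the claim to $(\alpha-4)^2 > 64\,\mu'_j(\mu'_i+\mu'_k)/[\mu'_k(\mu'_i+\mu'_j)]$, and bound the right-hand side by $16N^2/(N-1)$ at the extremal configuration $\mu'_k = 1/N$, $\mu'_j = 1/2$. The only difference is that you locate the maximum by discrete optimization over integer supernode sizes (using uniformity of $\mu$), whereas the paper treats the masses as continuous variables on $[1/N,\,1-2/N]$ and uses calculus; both yield the same bound.
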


\begin{proof}
Let
$$
G_{12} = \frac{32 \epsilon^2 \mu_1 \mu_2}{(\mu_1 + \mu_2)} \qquad\text{ and } \qquad
G_{13} = \frac{\frac{1}{2}\epsilon^2(\alpha-4)^2 \mu_1\mu_3}{(\mu_1+\mu_3)}.
$$
Then, $\dis^2(\pi^{13}) > \dis^2(\pi^{12})$ if $G_{13} > G_{12}$, or, equivalently, 
$$
(\alpha-4)^2 > 
\frac{32 \epsilon^2 \mu_1 \mu_2}{(\mu_1 + \mu_2)} \times \frac{(\mu_1+\mu_3)}{\frac{1}{2}\epsilon^2\mu_1\mu_3} = 
64\frac{\mu_2}{\mu_3} \frac{\mu_1+\mu_3}{\mu_1+\mu_2}.
$$
We want to find a lower bound that is valid for any choice of $\mu_1, \mu_2, \mu_3$, where $\frac{1}{N} \leq \mu_1,\mu_2,\mu_3 \leq 1-\frac{2}{N}$ and $\mu_1+\mu_2+\mu_3\leq1$. 
This bound is largest when $\mu_1+\mu_2+\mu_3=1$, thus, we let $\mu_2 = 1-\mu_1-\mu_3$.
We thus want to solve 
\begin{align*}    
\max_{\mu_1,\mu_3} G(\mu_1,\mu_3) 
:&= 
64
\frac{1-\mu_1-\mu_3}{\mu_3} \frac{\mu_1+\mu_3}{1-\mu_3} \\
&=
64
\frac{(\mu_1+\mu_3) - (\mu_1+\mu_3)^2}{\mu_3(1-\mu_3)}.
\end{align*}
We first maximize for $\mu_1$:
\begin{align*}    
\frac{\partial}{\partial \mu_1} G(\mu_1, \mu_3) &=
64
\frac{1-2(\mu_1+\mu_3)}{\mu_3(1-\mu_3)} =0 \\
\implies 
\mu_1^* &=
1/2 - \mu_3.
\end{align*}
Note that $\mu_1^*$ is guaranteed to be a maximizer as $G(\mu_1, \mu_3)$ is a negative quadratic in $\mu_1$.
After plugging in $\mu^*_1 = 1/2-\mu_3$ we optimize for $\mu_3$: 
\begin{align}
\frac{\partial}{\partial \mu_3} G(\mu^*_1, \mu_3) &=
\frac{\partial}{\partial \mu_3} 
\left(
64
\frac{(1/2)-(1/2)^2}{\mu_3(1-\mu_3)}
\right) \\ 
&=
\frac{\partial}{\partial \mu_3} 
\left( 
64
\frac{1/4}{\mu_3(1-\mu_3)}
\right) \\
& =-16\frac{1-2\mu_3}{\mu_3^2(1-\mu_3)^2} = 0 \\
\implies\mu_3' &= 1/2
\end{align}
It can be shown that $\mu_3'$ is a minimizer, not a maximizer. 
Therefore, the maximizer $\mu^*_3$ must occur at the boundary of its domain. 
By inspection, we get that $\mu_3^* = 1/N$.
Taken together, we have $\mu^*_1 = 1/2-1/N, \mu^*_2=1/2, \mu^*_3=1/N$, and 
\begin{align*}    
G(\mu_1^*, \mu_2^*, \mu_3^*) 
&= 
64
\frac{\mu_2^*}{\mu_3^*} \frac{\mu_1^*+\mu_3^*}{\mu_1^*+\mu_2^*}  =
64
\frac{1/2}{1/N} \frac{1/2}{1-1/N} = 16\frac{N^2}{N-1}.
\end{align*}
Therefore, $\dis^2(\pi^{13}) > \dis^2(\pi^{12})$ if $\alpha > 4 + 4N/\sqrt{N-1}$.
Since $f(x) = x^2$ is a strictly monotonic bijection on $[0, \infty)$, it follows that $\dis(\pi^{12}) < \dis(\pi^{13})$.
\end{proof}

\section{Weak Isomorphism and Coarsening} \label{sec:weakiso}

As previously mentioned, the space of (compact) measure networks equipped with the Gromov-Wasserstein distance is a pseudometric space. 
A pseudometric space consists of a set $X$ and a pseudo-metric $\hat{d}$, differing from a metric in that $\hat{d}(x,y) = 0$ is possible for $x \neq y$.
Moreover, the Gromov-Wasserstein distance is unique up to weak-isomorphism \cite{Chowdhury_Memoli_2019}, that is $\hat{d}(x,y) = 0$ if and only if $x$ and $y$ are weakly isomorphic.
There are two notions of weak-isomorphism discussed in \cite{Chowdhury_Memoli_2019}, the latter of which is only necessary for infinite measure networks and is therefore beyond the scope of this work.

A pair of measure networks $X = (X, S_X, \mu_X)$ and $Y = (Y, S_Y, \mu_Y)$ is called weakly isomorphic if there exists a third measure network $Z = (Z, S_Z, \mu_Z)$ and injective maps $\phi: Z \to X$ and $\psi: Z \to Y$ such that the following conditions hold:
\begin{enumerate}
\item 
$\phi_*(\mu_Z) = \mu_X \text{ and } \psi_*(\mu_Z) =\mu_Y$

\item 
$\sup_{z_1, z_2 \in Z} \left\lvert \phi^* S_X(z_1,z_2) - \psi^*S_Y(z_1, z_2) \right\rvert= 0$
\end{enumerate}
where $\phi_* = \mu_Z \circ \phi^{-1}$ and $\phi^* w_X(z_1, z_2) = w_Z(\phi(z_1), \phi(z_2))$; $\psi_*$ and $\psi^*$ are defined \textit{mutatis mutandis}.
The concept of a terminal network is discussed in \cite{Chowdhury_Memoli_2019, Chowdhury_Memoli_2023} and is the most concise representation of a measure network in the GW geometry. 
Any measure network in a weak isomorphism class can be represented using its minimal representative via blow-ups \cite{Chowdhury_Needham_Riemann}.
We can determine if a measure network is the minimal representative by checking if there exists a pair of nodes with identical neighborhoods, or equivalently, a pair of nodes that, once merged, induce zero GW distortion.

\begin{figure}[H]
    \centering
    \includegraphics[width=1\linewidth]{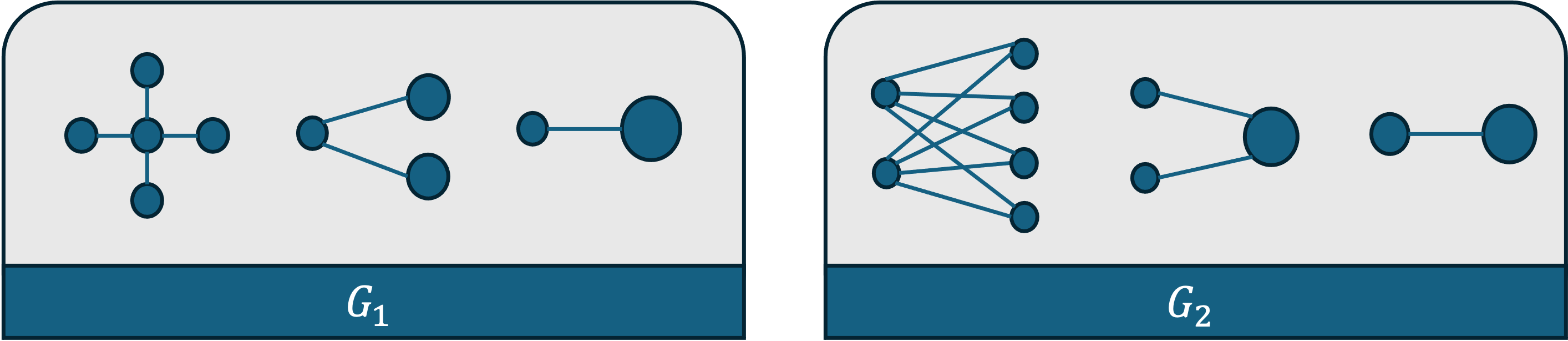}
    \caption{We show here two weak isomorphism classes of graphs. 
    The leftmost networks in each class have uniform mass on nodes and the weights of all edges in each class are equal. 
    The rightmost graphs are minimal representatives or terminal networks in their respective class. 
    Visually, the minimal representatives of the graphs $G_1$ and $G_2$ appear quite similar, but comparing their leftmost representation reveals how different the graphs are.  
    Classes $G_1$ and $G_2$ are both examples of complete bi-partite graphs; these classes of graphs benefit most from coarsening to the minimal representative, as a complete $k$-partite network can be reduced to a $k$-node minimal representative.} 
    \label{fig:placeholder}
\end{figure}

\section{Gromov-Wasserstein Coarsening and Sketching} \label{sec:gw-sketching}

The purpose of this section is to clarify the connection between the graph coarsening problem in the GW setting and the GW measure network sketching problem. 
The GW sketching problem has been previously considered in \cite{memoli_sidiropoulos_singhal_2018} in the case of metric measure spaces (a subset of measure networks \cite{Chowdhury_Memoli_2019}), where notions of duality were established between sketching and clustering with respect to the GW distance (albeit for the GW distance proposed by \cite{sturm2006geometry} which is not computationally feasible). 
The sketching problem in GW space of an $N$-point network $G$ to a $M$-point network $G'$ can be formulated as 
\begin{align}\label{eq:sketch}
\argmin_{G' \in \N_M} d_{GW}^2(G,G') &= 
\argmin_{G' \in \N_M} 
\min_{\pi\in\Pi(\mu,\mu')}
\dis^2(\pi)
\end{align}
We can upper-bound \eqref{eq:sketch} by restricting the feasibility set of the GW distance to those transport plans induced by coarsening matrices, i.e. $\pi = \diag(\mu) C_p$,
\begin{align} \label{eq:upper_bound}
\argmin_{G' \in \N_M} d_{GW}^2(G, G') \leq \argmin_{G'\in\N_M} \min_{C_p\in \C_{N,M}} \dis^2(\diag(\mu) C_p).
\end{align}
As is pointed out in \cite{Chen_2023}, given $C_p$ the $G'$ that minimizes the upper bound in Eq. \eqref{eq:upper_bound} is the semi-relaxed GW barycenter \cite{vincentcuaz_2022}, $\min_{G' \in \N_M} \dgw(G, G')$
\begin{align}
% \min_{G' \in \N_M} \dgw(G, G') 
& \leq
\min_{G'\in \N_M} 
\min_{C_p\in \C_{N,M}} 
\dis^2(\diag(\mu)C_p) \\
&= 
\min_{G'\in \N_M} 
\min_{C_p\in \C_{N,M}}
\langle \mathcal{L}^2_2(S,S') \otimes \diag(\mu)C_p, \diag(\mu)C_p \rangle \\
& = 
\min_{C_p \in \C_{N,M}}
\min_{G' \in \N_M} 
\langle \mathcal{L}^2_2(S,S') \otimes \diag(\mu)C_p, \diag(\mu)C_p \rangle \label{eq:pre_best_graph}\\
& = 
\min_{C_p\in\C_{N,M}}
\langle \mathcal{L}^2_2(S,C_w^\top S C_w) \otimes \diag(\mu)C_p, \diag(\mu)C_p \rangle \label{eq:best_graph}
\end{align}
where Eq. \eqref{eq:best_graph} follows from the fact that $S' = C_w^\top S C_w$ is the closed-form solution of the inner minimization problem in Eq. \eqref{eq:pre_best_graph}, as shown in \cite[Appendix B]{Chen_2023}, \cite[Equation 14]{Peyre_Cuturi_Solomon_2016}.
Taken together, we have $\argmin_{G' \in \N_M} d_{GW}^2(G, G')$
\begin{align}
\leq \argmin_{C_p \in \C_{N,M}} 
\lVert 
(S - C_p C_w^\top S C_w C_p^\top)
\odot (\mu \mu^\top)^{1/2}
\rVert_F^2. \label{eq:sketching_vs_coarsening}
\end{align}

\end{document}